\documentclass[twoside,11pt]{article}

\usepackage[abbrvbib, preprint]{jmlr2e}

\usepackage{blindtext}
\usepackage[margin=1in]{geometry}
\usepackage{microtype}

\usepackage{amsmath,amssymb, mathtools}
\usepackage{bm}

\usepackage{enumitem}
\usepackage{booktabs}

\usepackage{amsmath,amssymb,amsfonts, mathtools}

\usepackage[T1]{fontenc}
\usepackage{lmodern}
\usepackage{microtype}

\usepackage{placeins}

\usepackage{algorithm}
\usepackage{algorithmic}

\usepackage{graphicx}
\usepackage{booktabs}

\newtheorem{theorem}{Theorem}
\newtheorem{proposition}{Proposition}
\newtheorem{lemma}{Lemma}

\newtheorem{definition}{Definition}

\newtheorem{corollary}{Corollary}

\newcommand{\R}{\mathbb{R}}

\newcommand{\dist}{\operatorname{dist}}

\newcommand{\argmin}{\operatorname*{arg\,min}}

\DeclareMathOperator{\diag}{diag}

\usepackage{lastpage}

\firstpageno{1}

\begin{document}

\title{A Geometric View of SRC: Learning Representations for Stable Residual Inference}

\author{\name Vangelis P. Oikonomou \email viknmu@gmail.com
       }


\maketitle






\begin{abstract}
Reconstruction-based inference assigns a class by comparing class-wise reconstruction residuals; Sparse Representation Classification (SRC) is a canonical instance whose reliability depends on the geometry of the learned representation.
We adopt a strict training--inference separation: SRC is used only as a fixed test-time rule and is never differentiated, unrolled, or optimized during training.
In a span-level idealization based on class-conditional spans and their associated projection residuals, we formalize residual-ordering stability through a residual margin and characterize geometric obstructions---span overlap, dominance, and near-overlap via small principal angles---that can collapse this margin in worst-case directions.
This span-level theory is primary: it specifies when the idealized residual family is well-separated, and it provides a conditional solver-level interpretation for practical residual approximations (e.g., OMP) insofar as they remain close to the span-level residual ordering.
Under explicit coverage and separation assumptions, we derive a quantitative lower bound on the (idealized) residual margin.
Guided by these targets, we propose geometry-shaping objectives that promote masked within-class self-expressiveness, discourage cross-class reconstruction pathways and inter-class span alignment, and prevent collapse---without invoking SRC residuals or predictions during training.
Experiments on images (COIL-100), text (TREC), and EEG connectivity evaluate all representations under identical fixed SRC/OMP inference and report residual margins and geometric diagnostics; cross-entropy is included only as a reference geometry under the same evaluation protocol.
\end{abstract}

\begin{keywords}
Reconstruction-based inference, Sparse Representation Classification, geometry-shaped representations, residual margins, stability
\end{keywords}

\newpage

\section{Introduction}
\label{sec:intro}

Reconstruction-based inference attributes an input to the class that can best reconstruct its representation from class-specific training examples.
Unlike decision-boundary reasoning, this paradigm does not rely on calibrated probabilities or discriminative scores: class assignment is determined by comparing class-wise reconstruction residuals, and the reliability of the decision is governed by the residual ordering and its margin.
Sparse Representation Classification (SRC) is a canonical reconstruction-based rule.
Given a test embedding, SRC computes a \emph{global} sparse code over the full training dictionary and then evaluates, for each class, a residual obtained by restricting coefficients to that class \citep{Wright2009Robust}.
This inference principle is closely related to sparse approximation and sparse recovery methods \citep{Donoho2006Compressed,Tropp2007Signal}, but its decision mechanism is fundamentally residual-based rather than discriminative.
A recurring difficulty is that SRC-style inference is only as meaningful as the geometry of the representation on which it operates.
If multiple classes can reconstruct the same embedding nearly equally well, the residual margin becomes small and the rule becomes ambiguous (in worst-case directions), even if the implementation is numerically stable.
This observation shifts emphasis from learning a boundary to learning a geometry in which residual comparisons are interpretable.

This paper adopts a strict methodological stance: SRC is treated as a \emph{fixed inference principle}, not a module to be trained.
SRC is never differentiated, optimized, approximated by a trainable surrogate, or otherwise coupled to training.
Training and inference are separated both conceptually and algorithmically: representation learning shapes geometry; inference applies an unaltered residual rule on frozen embeddings.
In the experiments, cross-entropy (CE) is included only as a \emph{reference} way to shape embeddings under standard supervised training.
Under this separation, the central question becomes: \emph{what geometric properties of learned embeddings make residual-based inference well-defined and stable?}

Our analysis treats class-conditional structure through empirical spans/subspaces and their relative arrangement (e.g., principal angles), reflecting the broader view of data as approximately drawn from a union of low-dimensional subspaces \citep{Elhamifar2013Sparse,Soltanolkotabi2012A,Liu2013Robust}.
The relevant stability object is the residual margin, which directly certifies when the argmin over class residuals is robust to perturbations.
To isolate intrinsic geometric structure from solver-side effects, we analyze a span-level idealization.
Each class induces an empirical subspace (the span of its training embeddings), and class-wise residuals reduce to distances from a test embedding to these subspaces.
This abstraction leverages standard geometric objects for subspace relations, including principal angles \citep{Bjork:1973,Knyazev:2002}, and it lets us express stability and ambiguity as properties of geometry rather than as properties of a particular sparse solver.
In practice, SRC/OMP instantiates a \emph{practical residual family} that approximates these span-level residuals; throughout, solver-level conclusions are therefore interpreted conditionally, insofar as the solver-induced residual ordering remains close to the span-level ordering.

A first set of results records structural obstructions under the span idealization.
If two class spans intersect, there exist embeddings with zero residual for multiple classes, forcing the residual margin to vanish.
Even without exact intersection, near-overlap---quantified by small principal angles---implies the existence of directions with small competing residuals, again collapsing margins \citep{Bjork:1973,Knyazev:2002}.
A second obstruction concerns \emph{span inflation}.
If one class span contains (or effectively dominates) another, then points from the smaller span can be reconstructed perfectly by the larger one, eliminating any possibility of a positive margin.
Together, these observations formalize that (under the span idealization) stability cannot be guaranteed in worst-case directions when the induced geometry contains overlap, containment, or near-overlap; they also identify geometric failure modes that any solver-side approximation must confront when it attempts to realize class-wise residual comparisons.

Complementing necessity statements, we provide a sufficient geometric condition yielding a quantitative residual-margin lower bound.
Under explicit assumptions capturing (i) \emph{coverage} (the test embedding decomposes into an in-class component plus bounded perturbation), (ii) \emph{separation} (a lower bound on principal angles between the true class span and all others), and (iii) \emph{non-degenerate signal scale}, we derive a lower bound on the subspace residual margin.
This clarifies precisely what geometry shaping can guarantee for a fixed residual rule in an idealized subspace setting, and why such guarantees are inherently conditional.

Guided by these targets, we introduce geometry-shaping objectives that operate \emph{only} at training time and never invoke SRC residuals, margins, predictions, or accuracy-driven losses.
The objectives promote masked self-expressiveness within class, suppress cross-class reconstruction pathways, prevent collapse via a non-discriminative anchoring term, and discourage inter-class span alignment via a subspace repulsion penalty.
The use of self-expressiveness as a geometric regularizer is aligned with established subspace modeling principles \citep{Elhamifar2013Sparse,Liu2013Robust} and with classical perspectives on sparse representation and dictionary modeling \citep{Rubinstein2010Dictionaries,Aharon2006SVD}.

We summarize the contributions as follows:
(i) a strict training--inference separation in which SRC remains fixed and is applied only at test time on frozen embeddings \citep{Wright2009Robust};
(ii) a span-level geometric framework for residual inference centered on class spans, distances-to-spans, and residual margins as stability certificates;
(iii) impossibility/necessity results showing that overlap, dominance, and near-overlap force margin collapse via principal-angle geometry \citep{Bjork:1973,Knyazev:2002};
(iv) a sufficient condition providing a quantitative residual-margin lower bound under explicit coverage and separation assumptions; and
(v) geometry-shaping training objectives aligned with these conditions without coupling learning to the inference rule, together with a conditional solver-level interpretation for practical SRC/OMP residuals as approximations of the span-level residual family.

Finally, we emphasize scope.
Our statements characterize stability as a property of the \emph{learned geometry} under residual-based inference; they do not imply Bayes-optimality, calibrated probabilities, or a complete solver-theoretic account of the practical global sparse-coding plus class-restriction pipeline \citep{Donoho2006Compressed,Tropp2007Signal}.
Rather, the span-level theory remains primary: it provides a principled account of when reconstruction-based inference is interpretable (via residual margins) and when it is intrinsically unstable in worst-case directions.
Solver-level behavior is discussed only through a conditional, interpretive lens---as a stability-transfer statement that applies when the solver-side discrepancy between practical residuals (e.g., OMP-based) and the span-level residual ordering is controlled.
This framing is also compatible with empirical and theoretical observations that standard cross-entropy training can induce highly structured terminal feature geometry (e.g., neural collapse) that may be favorable for residual-based rules in some regimes \citep{Papyan:2020,Lu:2022,Han:2022}.

The remainder of the paper is organized as follows. Section~2 reviews related work on
reconstruction-based inference and subspace modeling. Section~3 develops the span-level stability framework for fixed SRC inference: it characterizes when residual decisions fail or are stable under geometric assumptions, derives conditional residual-margin bounds, and introduces geometry-shaping objectives together with an explicit training--inference separation and a conditional solver-level interpretation for OMP-based residual approximations. Section~4 reports experiments and
response-surface analyses under fixed SRC/OMP inference.  Section~5 provides conclusions and future directions of the current study. 

\section{Related Work}
\label{sec:related}

\paragraph{Reconstruction-based classification and SRC.}
Sparse Representation Classification (SRC) popularized residual-based decision rules in which a test point is assigned to the class achieving the smallest class-wise reconstruction residual under a global sparse code \citep{Wright2009Robust}.
A closely related line replaces strict sparsity with collaborative (typically $\ell_2$-regularized) representations while retaining the residual-based decision mechanism \citep{ZhangndCollaborative}.
Furthermore, probabilistic variants such as ProCRC preserve the residual-comparison mechanism while providing alternative regularization and modeling interpretations \citep{Cai:2016}.
Recent SRC formulations keep the residual-comparison decision rule unchanged while altering the coding dictionary via augmentation (e.g., extended design matrices) \citep{Oikonomou:2024}.
These works motivate our focus on \emph{residual comparisons} and, in particular, on the \emph{residual margin} as the natural stability certificate for reconstruction-based inference.
In contrast to approaches that treat SRC as a trainable component, our methodology keeps SRC fixed and confines learning to representation geometry.

\paragraph{Sparse coding, sparse recovery, and dictionary modeling.}
SRC inference relies on sparse approximation over a training dictionary, connecting it to the broader literature on sparse recovery and pursuit algorithms, including compressed sensing and orthogonal matching pursuit \citep{Donoho2006Compressed,Tropp2007Signal}.
A complementary thread studies how dictionaries are constructed or learned for sparse modeling, including overcomplete dictionary learning and synthesis models \citep{Aharon2006SVD,Rubinstein2010Dictionaries}.
In our paper, these results play a supporting role: they motivate solver-side implementations of residual computations and clarify the distinction between \emph{geometry} (what the learned representation makes possible at the level of class-conditional spans) and \emph{algorithmics} (how a particular sparse solver approximates the intended class-wise residual comparisons).
Accordingly, our theoretical statements are span-level, with solver-level connections treated conditionally through approximation fidelity.

\paragraph{Union-of-subspaces perspective and self-expressiveness.}
Modeling data as (approximately) lying in a union of low-dimensional subspaces underlies both classical subspace clustering and the intuition behind reconstruction-based inference.
Sparse Subspace Clustering (SSC) uses self-expressiveness (representing each point as a sparse combination of others) to recover subspace-preserving affinities under suitable conditions \citep{Elhamifar2013Sparse}, and Low-Rank Representation (LRR) uses low-rank self-representation to promote global subspace structure \citep{Liu2013Robust}.
Geometric analyses of subspace clustering characterize regimes in which self-expressiveness yields subspace-preserving behavior and how outliers or coherence can affect performance \citep{Soltanolkotabi2012A}.
Greedy neighbor-selection variants further connect self-expressiveness to pursuit-style constructions, e.g., generalized OMP formulations for SSC \citep{WundGreedier}.
While our setting is supervised and our test-time inference is SRC (not clustering), these works supply the geometric vocabulary we adopt: within-class self-expressiveness and low effective rank as coverage surrogates, and cross-class separation via limited alignment/overlap as a prerequisite for stable residual comparisons.

\paragraph{Subspace geometry and principal-angle notions.}
Quantifying subspace overlap and near-overlap naturally leads to principal angles and standard notions of distance between linear subspaces \citep{Bjork:1973,Knyazev:2002}.
Recent theory in sequential/iterative projection settings also highlights principal angles as informative for residual-type quantities and stability, through connections to alternating projections and related methods \citep{pmlr-v178-evron22a}.
These notions align directly with our obstruction results: exact overlap (nontrivial intersection) and near-overlap (a small smallest principal angle) imply the existence of directions with vanishing or arbitrarily small residual margins, rendering residual-based inference intrinsically ambiguous in worst-case directions.
Accordingly, we use a span-level idealization that expresses class-wise residuals as distances-to-subspaces and links stability to principal-angle separation.
Our use of principal-angle separation as a proxy for residual ambiguity aligns with analyses of subspace classification that relate principal angles to classification behavior in idealized settings \citep{Huang:2016}.

\paragraph{Deep subspace methods and learned self-expressiveness.}
Recent work embeds subspace modeling into deep representations, often by introducing a self-expressive mechanism in the latent space.
Deep Subspace Clustering Networks (DSCN) place a self-expressive layer between an encoder and decoder to learn representations amenable to subspace clustering \citep{JindDeep}.
In \citep{Peng2020Deep} develop a deep extension of subspace clustering (DSC) that combines nonlinear feature learning with a self-expressiveness objective.
Scaling issues induced by the $n\times n$ self-representation matrix have motivated alternative formulations; for example, in \citep{CaindEfficient} propose a more efficient embedded subspace clustering approach that avoids explicitly optimizing a full self-representation matrix.
Surveys of subspace clustering increasingly cover these deep variants and systematize design choices in nonlinear subspace learning \citep{Miao2025A}.
Recent self-expressive representation learning methods further develop scalable objectives for enforcing self-reconstruction structure in deep embeddings \citep{Zhao:2024}.
These works are conceptually adjacent to our training-time geometry shaping through masked self-expressiveness.
However, our focus differs in two key respects: (i) our objective is not clustering but \emph{stability of residual-based classification} under fixed SRC inference; and (ii) we enforce a strict training--inference separation, using self-expressiveness only as a \emph{geometric regularizer} rather than as a trainable surrogate for the inference rule.

\paragraph{Deep SRC and end-to-end sparse-representation classifiers.}
A separate line integrates sparse representation mechanisms into neural architectures for classification, effectively coupling representation learning to sparse coding or its approximation.
For example, in \citep{Abavisani2019Deep} propose a deep formulation in which a network learns features and includes a component responsible for sparse representation-based classification.
Related end-to-end pipelines explicitly couple reconstruction and classification objectives within a single trainable system \citep{Cao2022End}.
Such approaches are valuable points of comparison because they blur the boundary between representation learning and SRC-style inference.
Our methodology explicitly departs from this direction: SRC is never embedded as a differentiable module, never optimized during training, and is applied only at test time as a fixed residual rule on frozen embeddings.

\paragraph{Differentiation through optimization, unrolling, and implicit gradients.}
A broad modern toolkit differentiates through iterative algorithms or optimization problems, either by unrolling a solver for a finite number of steps or by implicit differentiation of optimality conditions.
Algorithm-unrolling surveys emphasize how iterative methods can be turned into trainable networks while retaining some interpretability \citep{Monga2020Algorithm}.
Implicit differentiation has been developed for nonsmooth and structured problems, including Lasso-type models \citep{BertrandndImplicit}, and more generally for modular differentiation of optimization layers \citep{BlondelndEfficient}.
Related work also studies computationally efficient approximations such as one-step differentiation of iterative algorithms \citep{Bolte:2023}.
We cite these works to delineate scope: although such techniques enable end-to-end training with optimization layers, our paper intentionally avoids this coupling.
We do not backpropagate through sparse coding, do not use solver outputs to define training losses, and do not reinterpret SRC as a trainable layer.

\paragraph{Positioning and methodological gap.}
Taken together, prior work provides (i) fixed residual-based inference rules (SRC/CRC), (ii) geometric modeling principles (self-expressiveness, low-rank/union-of-subspaces), and (iii) deep formulations that often couple inference mechanisms to training through unrolling or optimization layers.
The gap addressed here is methodological and geometric: we characterize when reconstruction-based residual comparisons are \emph{well-posed} and \emph{stable} at a span level as a function of representation geometry, and we design training objectives that shape that geometry without invoking SRC during training.
This positioning emphasizes inference reliability (via residual margins) rather than discriminative decision boundaries, consistent with the reconstruction-based principle underlying SRC.
More broadly, our emphasis on geometry as an organizing principle aligns with analyses that isolate geometric properties of normalized representations \citep{WangndUnderstanding} and with perspectives linking sparse and low-rank structure to representation mappings \citep{Saul:2022}.

\section{Methodology}
\label{sec:methodology}

\subsection{Overview and methodological contract}
\label{sec:method-overview}

We develop a representation learning methodology whose sole purpose is to shape embedding geometry so that
\emph{reconstruction-based} inference via a \emph{fixed} Sparse Representation Classification (SRC) rule is well-defined and stable.
Crucially, SRC is treated strictly as an \emph{inference principle}: it is never differentiated, optimized, approximated by a trainable surrogate,
or otherwise coupled to training. Training and inference are separated both conceptually and algorithmically.

\paragraph{What is fixed (test time).}
Given a frozen encoder $f_\theta$ and a training embedding dictionary grouped by class, SRC assigns labels using only
class-wise reconstruction residuals and their ordering (Algorithm~\ref{alg:test}).
The relevant reliability object is the \emph{residual margin}---the gap between the smallest and second-smallest class residuals---which certifies
when residual-based decisions are robust to \emph{perturbations of the residual values} (e.g., induced by embedding noise, finite-sample span estimation, or solver-side approximation).
Accordingly, our aim is not to learn decision boundaries or calibrated scores, but to learn embeddings for which residual comparisons are interpretable.

\paragraph{What can be learned (training time).}
Since inference is fixed, training can only act by regularizing embedding geometry.
We therefore design objectives that target three geometric requirements for stable residual inference:
\begin{enumerate}
\item[\textbf{R1}] \textbf{Within-class coverage/self-expressiveness:} test embeddings from class $y$ should lie close to the empirical class span.
\item[\textbf{R2}] \textbf{Limited cross-class reconstructability (low leakage):} other classes should not reconstruct a point from class $y$ nearly as well.
\item[\textbf{R3}] \textbf{Non-collapse (non-degeneracy):} embeddings must avoid trivial collapsed solutions that invalidate residual comparisons.
\end{enumerate}
At no point do we compute SRC residuals, margins, or predictions during training.

\paragraph{Analysis scope: span idealization.}
To isolate intrinsic geometric structure from solver-specific artifacts, we analyze residuals at the level of empirical class spans
$S_k=\mathrm{span}({Z_k})$ and idealized residuals $r_k^{\mathrm{sub}}(z)=\dist(z,S_k)$.
This abstraction lets us (i) state geometry-only obstructions (overlap/near-overlap) that force small margins, and
(ii) derive a quantitative margin lower bound under explicit regularity assumptions (coverage, separation, bounded perturbations).
These statements are not universal guarantees for a particular sparse solver; they clarify which geometric properties representation learning should promote.
The span-level abstraction, where decisions depend on distances to class subspaces, is
also consistent with classical matched-subspace detection viewpoints based on residual energy \citep{Scharf:1994}.

\subsection{Notation and fixed inference principle}
\label{sec:problem-setting}

Let $\{(x_i,y_i)\}_{i=1}^N$ be labeled training data with $y_i\in\{1,\dots,K\}$.
We learn an encoder
\begin{equation}
f_\theta:\mathcal{X}\to\mathbb{R}^p,
\qquad
z_i=f_\theta(x_i),
\qquad
\|z_i\|_2=1.
\end{equation}
All embeddings are $\ell_2$-normalized.
Let $Z_{\mathrm{tr}}=[z_1,\dots,z_N]\in\R^{p\times N}$ be the training embedding dictionary (columns grouped by class),
and let $Z_k=\{z_i:\ y_i=k\}$ denote the embeddings of class $k$.
The empirical class span is $S_k:=\mathrm{span}(Z_k)\subset\R^p$, with orthogonal projector $P_{S_k}$.
For class-wise residuals $\{r_k(z)\}_{k=1}^K$, write $r_{(1)}(z)\le r_{(2)}(z)\le\cdots$ for ordered values and define the residual margin
\begin{equation}
m(z):=r_{(2)}(z)-r_{(1)}(z).
\end{equation}

\subsection{Span idealization and margin-based stability}
\label{sec:geometry-logic}

\paragraph{Subspace residuals.}
Replacing sparse coding by unconstrained least squares on each class dictionary yields the span-level residual
\begin{equation}
r_k^{\mathrm{sub}}(z)=\dist\!\bigl(z,\mathrm{span}(Z_k)\bigr)=\|(I-P_{S_k})z\|_2,
\qquad
S_k:=\mathrm{span}(Z_k).
\end{equation}

\begin{definition}[Well-posed residual inference and margin-based stability]
\label{def:wellposed}
Under the span idealization, define $m^{\mathrm{sub}}(z):=r_{(2)}^{\mathrm{sub}}(z)-r_{(1)}^{\mathrm{sub}}(z)$.
Residual-based inference is \emph{well-posed at $z$} if $m^{\mathrm{sub}}(z)>0$.

More generally, for $\eta\ge 0$ it is \emph{$\eta$-stable at $z$} if $m^{\mathrm{sub}}(z)>2\eta$, i.e., the argmin of $\{r_k^{\mathrm{sub}}(z)\}_{k=1}^K$
cannot change under perturbations $\{\tilde r_k\}_{k=1}^K$ satisfying $\sup_k|\tilde r_k(z)-r_k^{\mathrm{sub}}(z)|\le \eta$.
\end{definition}

\begin{lemma}[Margin gap implies argmin stability]
\label{lem:gap-stability}
Let $a_1,\dots,a_K\in\R$ and let $\tilde a_1,\dots,\tilde a_K\in\R$ satisfy $\max_k|\tilde a_k-a_k|\le\eta$.
If $a_{(2)}-a_{(1)}>2\eta$, then $\argmin_k a_k = \argmin_k \tilde a_k$.
\end{lemma}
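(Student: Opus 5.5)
The plan is a direct comparison argument. Let $k^\star$ be an index attaining $a_{(1)}$. Because $a_{(2)}-a_{(1)}>2\eta\ge 0$, the minimum is strictly smaller than every other value. Hence $k^\star$ is the unique minimizer, $\argmin_k a_k=\{k^\star\}$, and $a_j\ge a_{(2)}$ for every $j\neq k^\star$. This uniqueness step is the only place where interpretation matters. The hypothesis must be read with $a_{(2)}$ the second entry of the sorted list with multiplicity, so that a tie at the minimum would force $a_{(2)}=a_{(1)}$ and contradict the assumption. I will state this convention explicitly; it agrees with the ordered-values notation used to define $m(z)$.

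Next I show that $k^\star$ is also the unique minimizer of the perturbed values. For any $j\neq k^\star$, the bound $\max_k|\tilde a_k-a_k|\le\eta$ gives
\begin{equation*}
\tilde a_j-\tilde a_{k^\star}\;\ge\;(a_j-\eta)-(a_{k^\star}+\eta)\;\ge\;a_{(2)}-a_{(1)}-2\eta\;>\;0 .
\end{equation*}
So $\tilde a_{k^\star}<\tilde a_j$ for all $j\neq k^\star$, and therefore $\argmin_k\tilde a_k=\{k^\star\}=\argmin_k a_k$.

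Nothing here is a real obstacle beyond the tie-handling convention noted above. The argument is two triangle-inequality estimates, and it also delivers the perturbation-robustness claim in Definition~\ref{def:wellposed}. Setting $a_k=r_k^{\mathrm{sub}}(z)$ and $\tilde a_k=\tilde r_k(z)$, the condition $m^{\mathrm{sub}}(z)>2\eta$ guarantees that the decided class is unchanged. When $\eta=0$, the same argument shows that well-posedness is exactly uniqueness of the minimizer.
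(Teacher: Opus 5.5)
Your proof is correct and follows the paper's argument essentially verbatim: fix a minimizer $k^\star$, use $a_j\ge a_{(2)}$ for $j\neq k^\star$, and chain the two $\eta$-perturbation bounds to get $\tilde a_j-\tilde a_{k^\star}\ge a_{(2)}-a_{(1)}-2\eta>0$. Your remark that the gap forces the unperturbed minimizer to be unique (reading $a_{(2)}$ with multiplicity) is a useful clarification that the paper leaves implicit.
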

\begin{proof}
Let $i^\star\in\argmin_k a_k$ so that $a_{i^\star}=a_{(1)}$.
For any $j\neq i^\star$, we have $a_j\ge a_{(2)}$, hence
\[
\tilde a_j - \tilde a_{i^\star}
\ge (a_j-\eta) - (a_{i^\star}+\eta)
\ge (a_{(2)}-a_{(1)}) - 2\eta
> 0,
\]
which implies $\tilde a_{i^\star}<\tilde a_j$ for all $j\neq i^\star$.
\end{proof}

\begin{lemma}[Residual perturbation under subspace (projector) error]
\label{lem:proj-perturb}
Let $S,S'\subset\R^p$ be subspaces with orthogonal projectors $P_S,P_{S'}$.
Then for any $z\in\R^p$,
\begin{equation}
\big|\dist(z,S)-\dist(z,S')\big|
\le \|(P_S-P_{S'})z\|_2
\le \|P_S-P_{S'}\|_2\,\|z\|_2.
\end{equation}
Moreover, $\|P_S-P_{S'}\|_2$ equals the sine of the largest principal angle between $S$ and $S'$ \citep{Knyazev:2002}.
\end{lemma}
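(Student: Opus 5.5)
The plan is to write both distances through the complementary projectors and then apply the reverse triangle inequality. Since $P_S$ is an orthogonal projector, $\dist(z,S)=\|(I-P_S)z\|_2$, and likewise $\dist(z,S')=\|(I-P_{S'})z\|_2$. The reverse triangle inequality then gives
\[
\big|\|(I-P_S)z\|_2-\|(I-P_{S'})z\|_2\big|\le \|(I-P_S)z-(I-P_{S'})z\|_2=\|(P_{S'}-P_S)z\|_2,
\]
which is the first inequality. The second inequality is just the definition of the operator norm: $\|(P_S-P_{S'})z\|_2\le\|P_S-P_{S'}\|_2\|z\|_2$.

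For the \emph{moreover} clause, the plan is to cite \citep{Knyazev:2002} and to state the hypothesis that this identity needs, namely $\dim S=\dim S'$. For subspaces of equal dimension, the classical result is that $\|P_S-P_{S'}\|_2=\sin\theta_{\max}(S,S')$. Without that hypothesis the norm equals $1$, which is the usual convention for the gap between subspaces of different dimension.

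If I had to justify the equal-dimension case rather than cite it, I would proceed as follows.
\begin{enumerate}
\item Choose orthonormal bases adapted to the principal vectors, using the CS decomposition.
\item This block-diagonalizes $P_S-P_{S'}$ into $2\times2$ blocks indexed by the principal angles $\theta_i$.
\item Each block is of the form $\begin{pmatrix}\sin^2\theta_i&-\sin\theta_i\cos\theta_i\\-\sin\theta_i\cos\theta_i&-\sin^2\theta_i\end{pmatrix}$.
\item Each such block has eigenvalues $\pm\sin\theta_i$, so the spectral norm of $P_S-P_{S'}$ is $\max_i\sin\theta_i$.
\end{enumerate}

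The only real subtlety is this dimension caveat in the principal-angle identity; the inequality chain itself is routine.
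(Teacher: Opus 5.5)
Your proof of the inequality chain is the paper's proof. The paper applies the triangle inequality to $(I-P_S)z=(I-P_{S'})z+(P_{S'}-P_S)z$ and swaps $S,S'$, which is your reverse-triangle-inequality step, then uses the operator-norm bound; for the \emph{moreover} clause it only cites \citep{Knyazev:2002}.

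Your equal-dimension caveat is correct and fixes an omission in the paper's statement. For $S=\mathrm{span}(e_1)\subset S'=\mathrm{span}(e_1,e_2)$ the only principal angle is $0$, yet $\|P_S-P_{S'}\|_2=1$. Your $2\times 2$ block computation (eigenvalues $\pm\sin\theta_i$) then correctly justifies the identity when $\dim S=\dim S'$.
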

\begin{proof}
By triangle inequality,
\[
\|(I-P_S)z\|_2 \le \|(I-P_{S'})z\|_2 + \|(P_S-P_{S'})z\|_2.
\]
Swapping $S$ and $S'$ yields the absolute-value bound.
The second inequality is the definition of the operator norm.
\end{proof}

\begin{corollary}[Operational $\eta$-stability under residual and span perturbations]
\label{cor:operational-stability}
Fix a test point $z$ and suppose the inference procedure uses residuals $\{\tilde r_k(z)\}_{k=1}^K$ satisfying
\[
\sup_k\big|\tilde r_k(z)-r_k^{\mathrm{sub}}(z)\big|\le \eta_{\mathrm{solver}}+\eta_{\mathrm{subspace}},
\qquad
\eta_{\mathrm{subspace}}:=\max_k \|P_{S_k}-P_{\tilde S_k}\|_2\,\|z\|_2,
\]
where $\{\tilde S_k\}$ are perturbed/estimated class spans (e.g., due to finite-sample effects or dictionary changes).
If $m^{\mathrm{sub}}(z)>2(\eta_{\mathrm{solver}}+\eta_{\mathrm{subspace}})$, then the argmin label under $\{\tilde r_k(z)\}$ matches the argmin label under $\{r_k^{\mathrm{sub}}(z)\}$.
\end{corollary}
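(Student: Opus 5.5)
The corollary follows directly from Lemma~\ref{lem:gap-stability}, with Lemma~\ref{lem:proj-perturb} explaining where the $\eta_{\mathrm{subspace}}$ term comes from. Set $a_k:=r_k^{\mathrm{sub}}(z)$, $\tilde a_k:=\tilde r_k(z)$, and $\eta:=\eta_{\mathrm{solver}}+\eta_{\mathrm{subspace}}$. The hypothesis gives $\max_k|\tilde a_k-a_k|\le\eta$, and the margin condition reads $a_{(2)}-a_{(1)}=m^{\mathrm{sub}}(z)>2\eta$. Lemma~\ref{lem:gap-stability} then yields $\argmin_k \tilde r_k(z)=\argmin_k r_k^{\mathrm{sub}}(z)$. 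The strict inequality $m^{\mathrm{sub}}(z)>2\eta\ge 0$ also ensures that the span-level argmin is unique, so the label is well defined, and the proof of Lemma~\ref{lem:gap-stability} shows that the perturbed argmin is unique too.

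Since the statement takes the bound on $\sup_k|\tilde r_k-r_k^{\mathrm{sub}}|$ as a hypothesis, I will also record why it is the natural decomposition, as a remark. Suppose the solver residuals are within $\eta_{\mathrm{solver}}$ of the span residuals for the estimated spans, i.e.\ $|\tilde r_k(z)-\dist(z,\tilde S_k)|\le\eta_{\mathrm{solver}}$. Insert $\dist(z,\tilde S_k)$ and apply the triangle inequality. Lemma~\ref{lem:proj-perturb} bounds $|\dist(z,\tilde S_k)-\dist(z,S_k)|$ by $\|P_{S_k}-P_{\tilde S_k}\|_2\|z\|_2\le\eta_{\mathrm{subspace}}$. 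Taking the maximum over $k$ recovers the assumed uniform bound.

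No step is a real obstacle. The only care needed is matching conventions: the $\sup$ over $k$ must be taken uniformly, since Lemma~\ref{lem:gap-stability} needs a single $\eta$ for all classes. Ties in the argmin are excluded by the strict margin assumption.
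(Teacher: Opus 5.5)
Your proof is correct and matches the paper's, which consists of applying Lemma~\ref{lem:gap-stability} with $a_k=r_k^{\mathrm{sub}}(z)$, $\tilde a_k=\tilde r_k(z)$, and $\eta=\eta_{\mathrm{solver}}+\eta_{\mathrm{subspace}}$. Your remark recovering the uniform bound from Lemma~\ref{lem:proj-perturb} and the triangle inequality is also correct, but it is optional, since the corollary assumes that bound as a hypothesis.
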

\begin{proof}
Apply Lemma~\ref{lem:gap-stability} with $a_k=r_k^{\mathrm{sub}}(z)$ and $\tilde a_k=\tilde r_k(z)$.
\end{proof}

\paragraph{Why margins (and not only labels).}
A fixed inference rule returns only a discrete label, but its \emph{reliability} is governed by the robustness of the residual ordering.
Two geometries can agree on the argmin on a finite set yet differ substantially in their margins and thus in sensitivity to perturbations
(embedding noise, dictionary thinning, or solver variability). This is why margin diagnostics are load-bearing for reconstruction-based inference.

\paragraph{Solver--geometry separation.}
A fixed SRC solver (e.g., OMP with sparsity budget $s$) is an algorithmic mechanism that produces a \emph{practical} set of residuals which may be viewed as a perturbation of the span-level residual family.
Deviations may occur when sparse recovery is inconsistent or dictionaries are poorly conditioned.
Our claims characterize stability as a property of the learned geometry (spans and margins), not universal guarantees for a particular sparse solver.

\subsection{Intrinsic obstructions: when residual inference must be unstable}
\label{sec:obstructions}

This subsection records three geometry-only failure modes that directly violate requirement \textbf{R2} (low leakage) and hence force small margins.
The statements are elementary linear-algebraic observations, but they are methodologically essential: they identify what geometry shaping must avoid.

\smallskip
\noindent\textbf{Overlap obstruction (exact intersection).}\label{sec:prop-overlap}
We first record the overlap failure mode: if two empirical class spans share a nontrivial direction, then worst-case residual margins must collapse.

\begin{lemma}[Distance to a subspace is 1-Lipschitz]
\label{lem:lipschitz-dist}
Let $S\subset\R^p$ be a subspace with orthogonal projector $P_S$.
Then, for all $z,z'\in\R^p$,
\[
\big|\dist(z,S)-\dist(z',S)\big|\le \|z-z'\|_2.
\]
\end{lemma}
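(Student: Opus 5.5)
The plan is to use the identity $\dist(z,S)=\|(I-P_S)z\|_2$ and the fact that $I-P_S$ is itself an orthogonal projector (onto $S^\perp$), so it has operator norm at most $1$. The argument is the same triangle-inequality pattern used in the proof of Lemma~\ref{lem:proj-perturb}, except that here the subspace is held fixed and the point is perturbed.

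First, write
\[
(I-P_S)z=(I-P_S)z'+(I-P_S)(z-z').
\]
The triangle inequality then gives
\[
\dist(z,S)\le \dist(z',S)+\|(I-P_S)(z-z')\|_2 .
\]
Second, bound the last term by $\|I-P_S\|_2\,\|z-z'\|_2\le\|z-z'\|_2$. This holds because $I-P_S$ is symmetric and idempotent, hence an orthogonal projection, and orthogonal projections have spectral norm at most $1$. Equivalently, by Pythagoras, $\|w\|_2^2=\|P_Sw\|_2^2+\|(I-P_S)w\|_2^2$, so $\|(I-P_S)w\|_2\le\|w\|_2$ for every $w$. Third, swap the roles of $z$ and $z'$ to get the reverse inequality, and combine the two into the absolute-value bound.

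As an alternative that avoids projectors altogether, one can use the variational characterization $\dist(z,S)=\min_{s\in S}\|z-s\|_2$. Let $s'$ attain the minimum for $z'$. Then $\dist(z,S)\le\|z-s'\|_2\le\|z-z'\|_2+\dist(z',S)$, and symmetry finishes the proof.

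There is no real obstacle here. The only point that needs a justification, rather than just being asserted, is the norm bound $\|I-P_S\|_2\le 1$, and I would settle it with the one-line Pythagorean argument above.
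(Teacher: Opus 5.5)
Your proof is correct and follows the paper's own argument: write $\dist(z,S)=\|(I-P_S)z\|_2$, apply the triangle inequality, bound the cross term by the operator norm of $I-P_S$, and swap $z,z'$. You state and justify $\|I-P_S\|_2\le 1$ via Pythagoras, which is slightly more precise than the paper's $\|I-P_S\|_2=1$; that equality fails in the degenerate case $S=\R^p$, where $I-P_S=0$.
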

\begin{proof}
Using $\dist(z,S)=\|(I-P_S)z\|_2$ and $\|I-P_S\|_2=1$,
\[
\dist(z,S)\le \dist(z',S)+\|(I-P_S)(z-z')\|_2 \le \dist(z',S)+\|z-z'\|_2.
\]
Swap $z,z'$ to conclude.
\end{proof}

\begin{proposition}[Impossibility under span overlap]
\label{prop:overlap}
If there exist classes $y\neq j$ such that $S_y\cap S_j\neq\{0\}$,
then for any $\varepsilon>0$ there exists a unit vector $z\in S_y$ such that $m^{\mathrm{sub}}(z)\le \varepsilon$.
\end{proposition}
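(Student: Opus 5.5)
Since the intersection is nontrivial, I will take an exact witness and obtain the conclusion with margin exactly zero, which is at most every $\varepsilon>0$. Choose a nonzero $v\in S_y\cap S_j$ and set $z:=v/\|v\|_2$. Then $z$ is a unit vector, and it lies in both $S_y$ and $S_j$ because subspaces are closed under scaling. This is the candidate $z\in S_y$ required by Proposition~\ref{prop:overlap}.

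Next I compute the two relevant residuals. Because $z\in S_y$, we have $P_{S_y}z=z$, so $r_y^{\mathrm{sub}}(z)=\|(I-P_{S_y})z\|_2=0$. Likewise $r_j^{\mathrm{sub}}(z)=0$. Every residual $r_k^{\mathrm{sub}}(z)=\dist(z,S_k)$ is nonnegative, so the smallest value is $r_{(1)}^{\mathrm{sub}}(z)=0$. Two of the $K$ residuals vanish, with $y\neq j$ being distinct indices. In the sorted list, at least the first two entries are therefore $0$, which gives $r_{(2)}^{\mathrm{sub}}(z)=0$. Hence $m^{\mathrm{sub}}(z)=0\le\varepsilon$ for every $\varepsilon>0$.

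The only point needing care is the convention that the ordered residuals $r_{(1)}\le r_{(2)}\le\cdots$ list the multiset with repetition. Under that convention, ties produce $r_{(2)}=r_{(1)}$. This is the convention implicit in Lemma~\ref{lem:gap-stability}, where $a_{(2)}-a_{(1)}>2\eta$ is meant to rule out ties, so I will state it explicitly.

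The $\varepsilon$ in the statement also suggests a robust form: there is a whole neighborhood of points with small margin, not just one exact point. I would add this as a remark. Take any unit $z'$ with $\|z'-z\|_2\le\varepsilon/2$. By Lemma~\ref{lem:lipschitz-dist}, both $r_y^{\mathrm{sub}}(z')$ and $r_j^{\mathrm{sub}}(z')$ are at most $\varepsilon/2$, and all residuals are nonnegative. So $r_{(1)}^{\mathrm{sub}}(z')\ge 0$ and $r_{(2)}^{\mathrm{sub}}(z')\le\max\{r_y^{\mathrm{sub}}(z'),r_j^{\mathrm{sub}}(z')\}\le\varepsilon/2$, which yields $m^{\mathrm{sub}}(z')\le\varepsilon/2\le\varepsilon$.

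The main step to verify in this remark is the claim that $r_{(2)}$ is bounded by the larger of any two residuals. It holds because the second-smallest of a list never exceeds the maximum of any two of its entries.
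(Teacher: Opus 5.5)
Your proof is correct and matches the paper's. Both normalize a nonzero vector of $S_y\cap S_j$, observe that $r_y^{\mathrm{sub}}(z)=r_j^{\mathrm{sub}}(z)=0$ so that $m^{\mathrm{sub}}(z)=0\le\varepsilon$, and add the same Lipschitz-based neighborhood remark. Your neighborhood bound is slightly sharper: using $r_{(1)}^{\mathrm{sub}}\ge 0$ you get $m^{\mathrm{sub}}(z')\le\delta$ for $\|z'-z\|_2\le\delta$, whereas the paper states $2\delta$.
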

\begin{proof}
Choose any nonzero $u\in S_y\cap S_j$ and set $z=u/\|u\|_2$.
Then $z\in S_y$ and also $z\in S_j$, so $r_y^{\mathrm{sub}}(z)=r_j^{\mathrm{sub}}(z)=0$.
Hence $m^{\mathrm{sub}}(z)=0\le\varepsilon$.

Moreover, Lemma~\ref{lem:lipschitz-dist} implies a quantitative neighborhood version:
for any $\delta>0$, any $z'$ with $\|z'-z\|_2\le \delta$ satisfies
$r_y^{\mathrm{sub}}(z')\le\delta$ and $r_j^{\mathrm{sub}}(z')\le\delta$, hence $m^{\mathrm{sub}}(z')\le 2\delta$.
\end{proof}

\smallskip
\noindent\textbf{Dominance obstruction (span containment).}\label{sec:lemma-dominance}
We next record dominance: if one class span contains another, then residual inference is degenerate on the contained span.

\begin{lemma}[Degeneracy under span dominance]
\label{lem:dominance}
If for some $y\neq j$, $S_j\subseteq S_y$, then for every unit $z\in S_j$,
$r_y^{\mathrm{sub}}(z)=r_j^{\mathrm{sub}}(z)=0$, hence $m^{\mathrm{sub}}(z)=0$.
\end{lemma}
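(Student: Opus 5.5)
The argument is a direct membership check followed by an appeal to the ordering of nonnegative residuals. First, fix a unit vector $z\in S_j$. Since $S_j$ is a subspace containing $z$, we have $P_{S_j}z=z$, and therefore $r_j^{\mathrm{sub}}(z)=\|(I-P_{S_j})z\|_2=0$. Next, the containment hypothesis $S_j\subseteq S_y$ gives $z\in S_y$, so $P_{S_y}z=z$ and likewise $r_y^{\mathrm{sub}}(z)=0$. This establishes the first claim of the lemma.

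For the margin, recall that every subspace residual is a norm, hence $r_k^{\mathrm{sub}}(z)\ge 0$ for all $k$. Two distinct indices $y\neq j$ attain the value $0$, which is the smallest possible value. In the sorted list, then, both $r_{(1)}^{\mathrm{sub}}(z)$ and $r_{(2)}^{\mathrm{sub}}(z)$ equal $0$. It follows that $m^{\mathrm{sub}}(z)=r_{(2)}^{\mathrm{sub}}(z)-r_{(1)}^{\mathrm{sub}}(z)=0$, exactly as in the proof of Proposition~\ref{prop:overlap}.

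The only point needing care is that the ordered statistics $r_{(1)}\le r_{(2)}$ count repeated values with multiplicity. Because two distinct classes tie at the minimum value $0$, the second-smallest entry is also $0$ rather than the next distinct value. With that convention the margin is $0$ and not merely small.

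Alternatively, one can observe that, since $S_j\neq\{0\}$ whenever such a unit $z$ exists, we have $S_y\cap S_j=S_j\neq\{0\}$. The conclusion could then be read off from Proposition~\ref{prop:overlap}. The direct argument is stronger, however, because it gives zero margin for \emph{every} unit vector of $S_j$, not just for some vector. Also, if $S_j=\{0\}$, there are no unit vectors in $S_j$ and the claim holds vacuously.
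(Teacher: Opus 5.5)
Your proof is correct and follows the paper's route: the paper simply observes that $z\in S_j\subseteq S_y$ gives $\dist(z,S_j)=\dist(z,S_y)=0$. Your remark that the order statistics count ties with multiplicity, so that $r_{(2)}^{\mathrm{sub}}(z)=0$ and hence $m^{\mathrm{sub}}(z)=0$, makes explicit a step the paper leaves implicit. One small correction to your aside: Proposition~\ref{prop:overlap} only guarantees \emph{some} unit vector with small margin, so the lemma's ``for every unit $z\in S_j$'' conclusion cannot be read off from it, and the direct argument is the one actually needed.
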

\begin{proof}
If $z\in S_j\subseteq S_y$, then $\dist(z,S_j)=\dist(z,S_y)=0$ by definition.
\end{proof}

\smallskip
\noindent\textbf{Near-overlap obstruction (small principal angles).}\label{sec:lemma-angles}
Finally, even without exact overlap, near-alignment quantified by small principal angles yields small worst-case competing residuals.

Let $\theta_{\min}(S_y,S_j)\in[0,\pi/2]$ denote the smallest principal angle between subspaces.

\begin{lemma}[Small principal angle $\Rightarrow$ small competing residual and margin]
\label{lem:small-angle}
Let $S_y,S_j$ be two subspaces. Then there exists a unit vector $z\in S_y$ such that
\begin{equation}
r_y^{\mathrm{sub}}(z)=0,
\qquad
r_j^{\mathrm{sub}}(z)=\sin\theta_{\min}(S_y,S_j),
\qquad
m^{\mathrm{sub}}(z)\le \sin\theta_{\min}(S_y,S_j).
\end{equation}
\end{lemma}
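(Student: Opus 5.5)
The plan is to use the variational characterization of the smallest principal angle, $\cos\theta_{\min}(S_y,S_j)=\max\{u^\top v:\ u\in S_y,\ v\in S_j,\ \|u\|_2=\|v\|_2=1\}$ \citep{Bjork:1973}. Equivalently, $\cos\theta_{\min}=\max_{u\in S_y,\|u\|_2=1}\|P_{S_j}u\|_2$. I assume both subspaces are nonzero, since otherwise principal angles are undefined. The unit sphere of $S_y$ is compact and $u\mapsto\|P_{S_j}u\|_2$ is continuous, so the maximum is attained at some unit $u^\star\in S_y$. I set $z:=u^\star$. Then $z\in S_y$, so $r_y^{\mathrm{sub}}(z)=\|(I-P_{S_y})z\|_2=0$ immediately.

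For the competing residual I use the Pythagorean identity for the orthogonal projector $P_{S_j}$:
\[
r_j^{\mathrm{sub}}(z)^2=\|z\|_2^2-\|P_{S_j}z\|_2^2=1-\cos^2\theta_{\min}(S_y,S_j)=\sin^2\theta_{\min}(S_y,S_j).
\]
Since $\theta_{\min}\in[0,\pi/2]$, the sine is nonnegative, and taking square roots gives $r_j^{\mathrm{sub}}(z)=\sin\theta_{\min}(S_y,S_j)$. The one step that needs care is the identity $\max_{u}\|P_{S_j}u\|_2=\cos\theta_{\min}$. For fixed unit $u$, Cauchy--Schwarz gives $\max_{v\in S_j,\|v\|_2=1}u^\top v=\|P_{S_j}u\|_2$, attained at $v=P_{S_j}u/\|P_{S_j}u\|_2$. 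If $P_{S_j}u=0$, both sides are $0$ and any unit $v$ works. With this settled, the rest is routine.

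For the margin, all residuals are nonnegative and $r_y^{\mathrm{sub}}(z)=0$, so $r_{(1)}^{\mathrm{sub}}(z)=0$. The second-smallest residual is at most any residual other than one attaining the minimum. If $j$ attains the minimum (i.e.\ $\theta_{\min}=0$), then $r_{(2)}^{\mathrm{sub}}(z)\le r_y^{\mathrm{sub}}(z)=0$, so the margin is $0$. Otherwise $r_{(2)}^{\mathrm{sub}}(z)\le r_j^{\mathrm{sub}}(z)$, since at most one index can be discounted as the minimizer and both $y$ and $j$ lie among the order statistics. In either case
\[
m^{\mathrm{sub}}(z)=r_{(2)}^{\mathrm{sub}}(z)-r_{(1)}^{\mathrm{sub}}(z)\le r_j^{\mathrm{sub}}(z)=\sin\theta_{\min}(S_y,S_j).
\]
Here $m^{\mathrm{sub}}$ is taken over all $K$ classes; additional classes can only lower $r_{(2)}^{\mathrm{sub}}$, so the inequality persists. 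I expect no real obstacle beyond stating the attainment and the order-statistic bound cleanly. As a consistency check, the case $\theta_{\min}=0$ recovers Proposition~\ref{prop:overlap}.
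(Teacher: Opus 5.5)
Your proof is correct and follows essentially the same route as the paper: take a unit principal vector $u\in S_y$ with $\|P_{S_j}u\|_2=\cos\theta_{\min}(S_y,S_j)$, then apply the Pythagorean identity to get $r_j^{\mathrm{sub}}(u)=\sin\theta_{\min}(S_y,S_j)$ while $r_y^{\mathrm{sub}}(u)=0$. The differences are only cosmetic: the paper obtains $u$ from the top singular vector of $U^\top V$ rather than from the variational characterization plus compactness, and it leaves implicit the order-statistic step $r_{(2)}^{\mathrm{sub}}(u)\le \max\{r_y^{\mathrm{sub}}(u),r_j^{\mathrm{sub}}(u)\}=r_j^{\mathrm{sub}}(u)$, which you spell out.
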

\begin{proof}
Let $U\in\R^{p\times d_y}$ and $V\in\R^{p\times d_j}$ be orthonormal bases for $S_y$ and $S_j$.
By the classical principal-angle characterization, the singular values of $U^\top V$ are $\{\cos(\theta_i)\}$,
with $\theta_1=\theta_{\min}(S_y,S_j)$ \citep{Bjork:1973,Knyazev:2002}.
Let $u\in S_y$ be a corresponding principal vector with $\|u\|_2=1$ achieving $\|V^\top u\|_2=\cos\theta_{\min}$.
Then
\[
r_j^{\mathrm{sub}}(u)=\dist(u,S_j)=\|(I-P_{S_j})u\|_2
=\sqrt{1-\|V^\top u\|_2^2}
=\sin\theta_{\min}(S_y,S_j),
\]
and since $u\in S_y$ we have $r_y^{\mathrm{sub}}(u)=0$.
Thus $m^{\mathrm{sub}}(u)\le r_j^{\mathrm{sub}}(u)$.
\end{proof}

\smallskip
\noindent
Taken together, these three obstructions isolate geometric regimes in which a fixed residual rule is intrinsically ambiguous in worst-case directions: shared span directions (overlap), span containment (dominance), and near-alignment (small principal angles) all force the span-level margin to collapse.
Accordingly, geometry shaping should (i) suppress cross-class reconstruction pathways that realize leakage across class spans, and (ii) discourage inter-class span alignment to avoid small-angle near-overlap.
In Sections~3.6.1 and~3.6.3 we instantiate these requirements via masked self-expressiveness and Frobenius-based subspace repulsion, while Section~3.6.2 addresses the complementary non-collapse requirement \textbf{R3}.

\subsection{A sufficient condition: residual margin lower bound under geometric regularity}
\label{sec:central-theorem}

The preceding results identify geometries for which small margins are unavoidable. We now state a complementary sufficient condition:
under explicit regularity assumptions capturing in-class coverage (\textbf{R1}), cross-class separation (\textbf{R2}),
and bounded perturbations, the subspace residual margin admits a quantitative lower bound.

\paragraph{Assumptions A1--A4 (geometry-aligned).}
Fix a test embedding $z\in\mathbb{R}^p$ with ground-truth class $y$.
\begin{enumerate}
\item[\textbf{A1}] (In-class representability in the empirical span; coverage at test time).
There exist $s\in S_y$ and $e\in\mathbb{R}^p$ such that $z=s+e$.
\item[\textbf{A2}] (Bounded perturbation / embedding noise).
$\|e\|_2 \le \varepsilon$.
\item[\textbf{A3}] (Span separation; exclusion of overlap and near-overlap).
For all $j\neq y$, $\theta_{\min}(S_y,S_j)\ge \alpha>0$.
\item[\textbf{A4}] (Non-degenerate signal scale).
$\|s\|_2 \ge \gamma>0$.
\end{enumerate}

\begin{lemma}[Principal-angle separation implies a distance lower bound]
\label{lem:angle-dist-lb}
Let $S_y,S_j\subset\R^p$ be subspaces and let $\theta_{\min}(S_y,S_j)$ be their smallest principal angle.
Then for any $s\in S_y$,
\[
\dist(s,S_j)\ \ge\ \|s\|_2\,\sin\theta_{\min}(S_y,S_j).
\]
\end{lemma}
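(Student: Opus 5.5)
The plan is to reduce to the unit-vector case and then use the variational (min--max) characterization of the smallest principal angle, exactly as in the proof of Lemma~\ref{lem:small-angle}. If $s=0$ the inequality is trivial, since the right-hand side vanishes. Otherwise write $s=\|s\|_2\,u$ with $u\in S_y$, $\|u\|_2=1$. Because $\dist(\cdot,S_j)=\|(I-P_{S_j})\,\cdot\,\|_2$ is positively homogeneous, we have $\dist(s,S_j)=\|s\|_2\,\dist(u,S_j)$. It therefore suffices to show $\dist(u,S_j)\ge\sin\theta_{\min}(S_y,S_j)$ for every unit $u\in S_y$.

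For this, take orthonormal bases $U$ of $S_y$ and $V$ of $S_j$ as in Lemma~\ref{lem:small-angle}, and write $u=Uc$ with $\|c\|_2=1$. By Pythagoras,
\[
\dist(u,S_j)^2=\|u\|_2^2-\|P_{S_j}u\|_2^2=1-\|V^\top U c\|_2^2 .
\]
The classical characterization cited in Lemma~\ref{lem:small-angle} states that the singular values of $U^\top V$ (equivalently, of $V^\top U$) are the cosines of the principal angles. The largest of them is $\cos\theta_{\min}(S_y,S_j)$, so $\|V^\top U\|_2=\cos\theta_{\min}$. Hence $\|V^\top Uc\|_2\le\cos\theta_{\min}$, which gives $\dist(u,S_j)^2\ge 1-\cos^2\theta_{\min}=\sin^2\theta_{\min}$. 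Taking square roots (both sides are nonnegative, since $\theta_{\min}\in[0,\pi/2]$) and multiplying by $\|s\|_2$ yields the claim.

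The only point requiring care is identifying $\|P_{S_j}u\|_2$ with $\|V^\top u\|_2$ and using the \emph{largest} singular value as the bound. Equivalently, $\cos\theta_{\min}=\max_{u\in S_y,\,v\in S_j,\,\|u\|=\|v\|=1}u^\top v$, so $\|P_{S_j}u\|_2=\max_{v}u^\top v\le\cos\theta_{\min}$. Degenerate cases also need a word. If $S_j=\{0\}$, the distance equals $\|s\|_2$ and the bound holds under any convention for the angle. If $S_y\cap S_j\neq\{0\}$, then $\theta_{\min}=0$ and the bound is vacuous. Neither case needs separate machinery.
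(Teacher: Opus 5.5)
Your proof is correct and follows the paper's argument. Like the paper, you reduce to a unit vector $u\in S_y$, bound $\|P_{S_j}u\|_2\le\cos\theta_{\min}$, use Pythagoras to get $\dist(u,S_j)\ge\sin\theta_{\min}$, and rescale by $\|s\|_2$; your step $\|V^\top Uc\|_2\le\|V^\top U\|_2$ just spells out the bound the paper cites as holding ``by definition of $\theta_{\min}$.''
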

\begin{proof}
If $s=0$ the claim is trivial. Otherwise write $s=\|s\|_2\,u$ with $\|u\|_2=1$ and $u\in S_y$.
By definition of $\theta_{\min}$, for any unit $u\in S_y$ we have $\|P_{S_j}u\|_2\le \cos\theta_{\min}(S_y,S_j)$.
Therefore,
\[
\dist(u,S_j)=\|(I-P_{S_j})u\|_2=\sqrt{1-\|P_{S_j}u\|_2^2}
\ge \sqrt{1-\cos^2\theta_{\min}}=\sin\theta_{\min}.
\]
Scaling by $\|s\|_2$ gives the result.
\end{proof}

\begin{theorem}[Subspace residual margin lower bound]
\label{thm:margin-lb}
Let $r_k^{\mathrm{sub}}(z)=\dist(z,S_k)$ and
$m^{\mathrm{sub}}(z)=r^{\mathrm{sub}}_{(2)}(z)-r^{\mathrm{sub}}_{(1)}(z)$.
Under Assumptions A1--A4,
\begin{equation}
m^{\mathrm{sub}}(z)\ \ge\ \gamma\sin(\alpha)\ -\ 2\varepsilon.
\end{equation}
In particular, if $\gamma\sin(\alpha)>2\varepsilon$, then $m^{\mathrm{sub}}(z)>0$ and the smallest subspace residual is uniquely attained at class $y$.
\end{theorem}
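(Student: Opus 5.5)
The plan is to bound the true-class residual from above and every competing residual from below, then compare the two bounds. All of these bounds come from Lemma~\ref{lem:lipschitz-dist} and Lemma~\ref{lem:angle-dist-lb}.

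\emph{Upper bound for the true class.} Since $s\in S_y$ by A1, we have $\dist(s,S_y)=0$. By Lemma~\ref{lem:lipschitz-dist} and A2,
\[
r_y^{\mathrm{sub}}(z)\le \dist(s,S_y)+\|z-s\|_2=\|e\|_2\le\varepsilon .
\]

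\emph{Lower bound for each competitor.} Fix $j\neq y$. Lemma~\ref{lem:angle-dist-lb} with A3 and A4 gives $\dist(s,S_j)\ge \|s\|_2\sin\theta_{\min}(S_y,S_j)\ge\gamma\sin\alpha$. Here I use that $\sin$ is increasing on $[0,\pi/2]$ and that $\theta_{\min}\ge\alpha$. Applying Lemma~\ref{lem:lipschitz-dist} again,
\[
r_j^{\mathrm{sub}}(z)\ge \dist(s,S_j)-\|e\|_2\ge\gamma\sin\alpha-\varepsilon .
\]

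\emph{Passing to order statistics.} This is the only delicate step, since $m^{\mathrm{sub}}$ is defined through $r_{(1)}$ and $r_{(2)}$ rather than through $r_y$ and $\min_{j\ne y}r_j$. First, $r_{(1)}(z)\le r_y^{\mathrm{sub}}(z)\le\varepsilon$. For $r_{(2)}$, the $K$ residuals are $r_y$ together with the values $r_j$ for $j\ne y$. At most one of them, namely $r_y$, is excluded from the set $\{r_j:j\neq y\}$. Hence the second smallest satisfies $r_{(2)}(z)\ge\min_{j\neq y}r_j^{\mathrm{sub}}(z)\ge\gamma\sin\alpha-\varepsilon$. This holds in both cases: if $r_y$ is the minimum, then $r_{(2)}=\min_{j\ne y}r_j$; otherwise $r_{(2)}\ge r_{(1)}=\min_{j\ne y}r_j$. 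Subtracting the two bounds gives $m^{\mathrm{sub}}(z)\ge\gamma\sin\alpha-2\varepsilon$.

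\emph{The ``in particular'' clause.} If $\gamma\sin\alpha>2\varepsilon$, then for every $j\neq y$,
\[
r_y^{\mathrm{sub}}(z)\le\varepsilon<\gamma\sin\alpha-\varepsilon\le r_j^{\mathrm{sub}}(z).
\]
So class $y$ is the unique argmin, and the margin is positive by the bound just proved.
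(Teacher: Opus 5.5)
Your proof is correct and follows the paper's argument: you bound $r_y^{\mathrm{sub}}(z)\le\varepsilon$ via Lemma~\ref{lem:lipschitz-dist}, bound $r_j^{\mathrm{sub}}(z)\ge\gamma\sin\alpha-\varepsilon$ via Lemmas~\ref{lem:angle-dist-lb} and~\ref{lem:lipschitz-dist}, and then subtract. Your explicit check that $r_{(1)}\le r_y^{\mathrm{sub}}$ and $r_{(2)}\ge\min_{j\ne y}r_j^{\mathrm{sub}}$ fills in the passage to order statistics, which the paper's proof states without justification.
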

\begin{proof}
By Lemma~\ref{lem:lipschitz-dist} (applied with $S=S_y$) and since $s\in S_y$,
\[
r_y^{\mathrm{sub}}(z)=\dist(s+e,S_y)\le \dist(s,S_y)+\|e\|_2=\|e\|_2\le \varepsilon.
\]
For $j\neq y$, again by Lemma~\ref{lem:lipschitz-dist} (applied with $S=S_j$),
\[
r_j^{\mathrm{sub}}(z)=\dist(s+e,S_j)\ge \dist(s,S_j)-\|e\|_2.
\]
By Lemma~\ref{lem:angle-dist-lb} and A3--A4,
\[
\dist(s,S_j)\ge \|s\|_2\,\sin\theta_{\min}(S_y,S_j)\ge \gamma\sin(\alpha).
\]
Thus $r_j^{\mathrm{sub}}(z)\ge \gamma\sin(\alpha)-\varepsilon$ for all $j\neq y$, and therefore
\[
m^{\mathrm{sub}}(z)=r^{\mathrm{sub}}_{(2)}(z)-r^{\mathrm{sub}}_{(1)}(z)\ge
\big(\gamma\sin(\alpha)-\varepsilon\big)-\varepsilon
=\gamma\sin(\alpha)-2\varepsilon.
\]
\end{proof}

\noindent
To interpret Theorem~\ref{thm:margin-lb} in methodological terms, it is useful to read the bound as a direct translation from geometric regularity to residual-ordering stability.
Theorem~\ref{thm:margin-lb} turns the informal requirements into explicit geometric targets:
coverage (\textbf{A1}) and non-degeneracy (\textbf{A4}) motivate within-class reconstructability and anti-collapse,
while separation (\textbf{A3}) motivates coherence/leakage suppression.
The guarantee is inherently conditional: it cannot hold in the overlap or near-overlap regimes above, and it degrades with perturbation scale $\varepsilon$.
Guided by these targets, Section~\ref{sec:geometry-objectives} introduces training-time surrogates that promote coverage, separation, and non-collapse without coupling learning to SRC inference.

\subsection{Training objective for geometry shaping under fixed inference}
\label{sec:geometry-objectives}

Motivated by the obstruction results (Section~\ref{sec:obstructions}) and the sufficient margin condition (Theorem~\ref{thm:margin-lb}),
we shape embeddings to satisfy \textbf{R1--R3} using implementation-independent surrogates, without invoking SRC during training.
Each training term corresponds to a geometric requirement and a failure mode:
masked self-expressiveness targets \textbf{R1} while suppressing cross-class coefficients (part of \textbf{R2});
subspace repulsion targets \textbf{R2} by discouraging near-alignment (Lemma~\ref{lem:small-angle});
and a non-discriminative anchor targets \textbf{R3} by preventing collapse.
At evaluation time, we therefore report residual margins (stability) and leakage/coherence proxies (cross-class alignment) in addition to accuracy.
Furthermore, given a mini-batch $\{(x_i,y_i)\}_{i=1}^n$, we form normalized embeddings $z_i=f_\theta(x_i)$ with $\|z_i\|_2=1$ and stack
$Z=[z_1,\dots,z_n]\in\mathbb{R}^{p\times n}$, and,
we define the class mask $M\in\{0,1\}^{n\times n}$ by $M_{ij}=1[y_i=y_j]$. 
Next, we describe each term in the training objective and how it operationalizes the geometric requirements \textbf{R1--R3}.

\subsubsection{Masked ridge self-expressiveness (core)}

The core training-time geometry term is a masked self-expressiveness objective.
We define the coefficient matrix
\(C^\star(Z)\in\mathbb{R}^{n\times n}\) by

\begin{equation}
\label{Eq:C_star}
C^\star(Z)
\in
\arg\min_{C\in\mathbb{R}^{n\times n}}
\|Z-ZC\|_F^2
+
\lambda\|C\|_F^2
+
\mu\|(1-M)\odot C\|_F^2
\quad
\text{s.t. } \operatorname{diag}(C)=0 .
\end{equation}
For \(\lambda>0\), this constrained ridge problem is well-defined column-wise.
The \(i\)-th column \(c_i^\star\) reconstructs \(z_i\) from the other embeddings
in the mini-batch, while the diagonal constraint \(c_{ii}=0\) prevents the
trivial self-copying solution.
The three terms in Eq.~(7) have distinct roles. The reconstruction term
\(\|Z-ZC\|_F^2\) encourages self-expressiveness: each embedding should be
recoverable from other batch embeddings. The ridge term
\(\lambda\|C\|_F^2\) stabilizes the coefficient matrix and discourages
excessively large reconstruction weights. The masked penalty
\(\mu\|(1-M)\odot C\|_F^2\) penalizes coefficients assigned to samples from
different classes. Thus, within-class reconstruction pathways are allowed,
whereas cross-class reconstruction pathways are discouraged but not hard
forbidden.

The self-expressiveness loss is then
\begin{equation} 
L_{\mathrm{SE}}(Z,y)
:=
\|Z-ZC^\star(Z)\|_F^2 .
\end{equation}
The coefficient matrix \(C^\star(Z)\) is used only to define a training-time
geometric regularizer. It is not the test-time SRC/OMP sparse code, it is not
used to form training predictions, and no SRC residuals, residual margins, or
SRC-based labels are computed during training. In implementation, the diagonal
constraint is enforced as part of the constrained masked-ridge solve.

\subsubsection{Variance anchor (anti-collapse)}
\label{sec:variance-anchor}

To avoid degenerate collapsed representations, we use a non-discriminative
scale-aware variance anchor. Since all embeddings are \(\ell_2\)-normalized,
the natural coordinate-wise standard-deviation scale is \(1/\sqrt p\), up to a
constant factor, rather than a fixed value such as \(1\). For
unit-normalized embeddings in \(\mathbb{R}^p\), the average coordinate energy is
approximately \(1/p\), so a coordinate-wise standard-deviation floor should be
scaled relative to \(1/\sqrt p\).

We therefore define
\begin{equation}
L_{\mathrm{anch}}(Z)
=
\frac{1}{p}
\sum_{j=1}^{p}
\max\left\{
0,
\frac{c}{\sqrt p}
-
\operatorname{std}(Z_{j,:})
\right\},
\end{equation}
where \(c>0\) controls the strength of the variance floor. This term is not
intended to impose class separation or to optimize SRC residuals. It is a
batch-level anti-collapse regularizer that discourages all embeddings from
collapsing to a single point by maintaining a minimum amount of coordinate-wise
dispersion compatible with unit-normalized embeddings.

\subsubsection{Inter-class subspace repulsion}
\label{sec:repulsion}

To discourage near-alignment of class spans, we penalize coherence between estimated class subspace bases.
For each class $k$ represented in the batch, form centered class embeddings $Z_k$ and extract an orthonormal basis
$U_k\in\mathbb{R}^{p\times d}$ from the top $d$ principal directions.
Let $P=\{(k,\ell):k<\ell,\ U_k,U_\ell\ \text{defined}\}$ and define
\begin{equation}
\mathcal{L}_{\mathrm{rep}}(Z,y)
:=\frac{1}{|P|}\sum_{(k,\ell)\in P}\|U_k^\top U_\ell\|_F^2.
\end{equation}

\begin{lemma}[Frobenius repulsion upper-bounds worst-case alignment]
\label{lem:frob-spectral}
For any matrix $A$, $\|A\|_2\le \|A\|_F$.
Consequently, for orthonormal class bases $U_k,U_\ell$,
\[
\sigma_{\max}(U_k^\top U_\ell)=\|U_k^\top U_\ell\|_2 \le \|U_k^\top U_\ell\|_F,
\]
and since $\sigma_{\max}(U_k^\top U_\ell)=\cos\!\bigl(\theta_{\min}(S_k,S_\ell)\bigr)$, the repulsion penalty
controls (an upper bound on) the worst-case alignment mode tied to the smallest principal angle.
\end{lemma}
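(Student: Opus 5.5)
The plan has two parts: a general inequality between the spectral and Frobenius norms, and then a specialization to orthonormal bases using the principal-angle characterization already invoked in Lemma~\ref{lem:small-angle}.

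For the general inequality, let $A$ have singular values $\sigma_1\ge\sigma_2\ge\cdots\ge 0$. By definition $\|A\|_2=\sigma_1$. Using the singular value decomposition and the unitary invariance of the Frobenius norm, we get $\|A\|_F^2=\operatorname{tr}(A^\top A)=\sum_i\sigma_i^2$. Since every term is nonnegative, $\sigma_1^2\le\sum_i\sigma_i^2$, and taking square roots gives $\|A\|_2\le\|A\|_F$.

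An SVD-free alternative works equally well. For a unit vector $x$, the Cauchy--Schwarz inequality applied row-wise gives $\|Ax\|_2^2=\sum_i (a_i^\top x)^2\le\sum_i\|a_i\|_2^2=\|A\|_F^2$. Taking the supremum over unit $x$ yields the same bound.

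For the specialization, set $A=U_k^\top U_\ell$. The identity $\|A\|_2=\sigma_{\max}(A)$ is immediate from the definition of the operator norm. The remaining ingredient is the classical fact, already cited from \citep{Bjork:1973,Knyazev:2002}, that when $U_k,U_\ell$ have orthonormal columns spanning $S_k,S_\ell$, the singular values of $U_k^\top U_\ell$ are the cosines of the principal angles. The largest of these is $\cos\theta_{\min}(S_k,S_\ell)$.

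If a self-contained justification is wanted, I would use the variational form $\cos\theta_{\min}=\max_{u\in S_k,v\in S_\ell,\|u\|=\|v\|=1}u^\top v$. Writing $u=U_k a$ and $v=U_\ell b$, orthonormality gives $\|a\|=\|u\|$ and $\|b\|=\|v\|$. This reduces the maximum to $\max_{\|a\|=\|b\|=1}a^\top U_k^\top U_\ell b=\sigma_{\max}(U_k^\top U_\ell)$.

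Chaining the two parts gives
\[
\cos\theta_{\min}(S_k,S_\ell)=\|U_k^\top U_\ell\|_2\le\|U_k^\top U_\ell\|_F .
\]
The ``consequently'' interpretation follows by summing over the pairs in $P$. Because each term in $\mathcal{L}_{\mathrm{rep}}$ is nonnegative, a single pair's term is at most $|P|$ times the average, so
\[
\max_{(k,\ell)\in P}\cos^2\theta_{\min}(S_k,S_\ell)\le|P|\,\mathcal{L}_{\mathrm{rep}}.
\]
Thus driving the penalty down forces every smallest principal angle away from zero, which is exactly the failure mode identified in Lemma~\ref{lem:small-angle}.

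The only delicate step is the principal-angle identification, and the delicacy is one of interpretation rather than mathematics. The $U_k$ in the loss are top-$d$ principal directions of \emph{centered} batch embeddings, so their spans are estimated subspaces rather than the empirical spans $S_k$. I would therefore state the lemma for the subspaces $\mathrm{span}(U_k)$. I would then remark that any link to the true $S_k$ requires an additional perturbation argument in the spirit of Lemma~\ref{lem:proj-perturb}, which lies outside the statement itself.
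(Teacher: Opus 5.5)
Your proof is correct and its core is the paper's argument: bound $\sigma_{\max}$ by $(\sum_i\sigma_i^2)^{1/2}$ via the singular values, and invoke the classical characterization $\sigma_{\max}(U_k^\top U_\ell)=\cos\theta_{\min}$. Your additions are sound supplements: the row-wise Cauchy--Schwarz route, the variational derivation of the cosine identity, the aggregate bound $\max_{(k,\ell)\in P}\cos^2\theta_{\min}\le|P|\,\mathcal{L}_{\mathrm{rep}}$, and the caveat that the loss's $U_k$ span centered top-$d$ subspaces rather than $S_k$, a point the paper glosses over.
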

\begin{proof}
Let $A$ have singular values $\{\sigma_i\}$. Then $\|A\|_2=\max_i \sigma_i \le (\sum_i \sigma_i^2)^{1/2}=\|A\|_F$.
\end{proof}

\paragraph{Training objective and freezing:}
\label{sec:training}

We train $f_\theta$ using only geometry-shaping objectives:
\begin{equation}
\min_\theta\ \mathbb{E}_{\text{batches}}\Big[
\lambda_{\mathrm{SE}}\mathcal{L}_{\mathrm{SE}}(Z,y)+
\beta_{\mathrm{anch}}\mathcal{L}_{\mathrm{anch}}(Z)+
\lambda_{\mathrm{rep}}\mathcal{L}_{\mathrm{rep}}(Z,y)\Big],
\qquad
z_i=\mathrm{normalize}(f_\theta(x_i)).
\end{equation}
No classifier head, cross-entropy, SRC residual term, or any accuracy-driven loss appears during training.
After training, $f_\theta$ is frozen.

\subsection{Additional implications: leakage suppression and coverage limits}
\label{sec:additional-implications}

\noindent
We next record two complementary implications that connect the training surrogates to the span-level stability framework while clarifying scope.
First, the masked self-expressiveness objective suppresses cross-class reconstruction pathways at the batch level, linking the training signal to \textbf{R2} without invoking SRC during training.
\label{sec:block-se}

\begin{proposition}[Approximate block self-expressiveness from masking]
\label{prop:block-se}
Let \(Z=[Z_1,\ldots,Z_K]\) be a batch embedding matrix grouped by class,
\(M\) the corresponding mask, and let \(C^\star(Z)\) be the constrained
masked-ridge solution defined in Eq.~(\ref{Eq:C_star}).
If $\|(1-M)\odot C^{*}(Z)\|_F$ is small and $\|Z-ZC^{*}(Z)\|_F$ is small,
then the cross-class contribution to reconstructed embeddings is small:
\begin{equation}
\|Z((1-M)\odot C^{*}(Z))\|_F\ \le\ \|Z\|_{2}\,\|(1-M)\odot C^{*}(Z)\|_F.
\end{equation}
\end{proposition}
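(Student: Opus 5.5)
The displayed inequality does not depend on the optimality of \(C^\star(Z)\); it is the submultiplicativity bound \(\|AB\|_F\le\|A\|_2\|B\|_F\). The plan is to prove that bound, apply it with \(A=Z\) and \(B=(1-M)\odot C^\star(Z)\), and then explain why the smallness hypotheses make the conclusion meaningful.

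\textbf{Step 1 (decomposition).} Split the coefficient matrix into within-class and cross-class parts, \(C^\star = M\odot C^\star + (1-M)\odot C^\star\). This is exact because \(M\) is a \(0/1\) mask and \(M+(1-M)\) is the all-ones matrix. Hence the reconstruction decomposes as
\[
ZC^\star = Z(M\odot C^\star) + Z((1-M)\odot C^\star).
\]
Since \(Z\) is grouped by class, \(M\) is block diagonal. So column \(i\) of \(Z(M\odot C^\star)\) lies in the span of the batch embeddings of class \(y_i\), and \(Z((1-M)\odot C^\star)\) is precisely the cross-class contribution to the reconstructed embeddings.

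\textbf{Step 2 (the norm bound).} Write \(B=(1-M)\odot C^\star\) with columns \(b_i\). Then
\[
\|ZB\|_F^2=\sum_i\|Zb_i\|_2^2\le \|Z\|_2^2\sum_i\|b_i\|_2^2=\|Z\|_2^2\|B\|_F^2 .
\]
Taking square roots gives the stated inequality.

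\textbf{Step 3 (interpretation under the hypotheses).} The factor \(\|Z\|_2\) is controlled: each column of \(Z\) has unit norm, so \(\|Z\|_2\le\|Z\|_F=\sqrt n\). Therefore a small \(\|(1-M)\odot C^\star\|_F\) forces the cross-class contribution to be small, of order at most \(\sqrt n\) times the mask norm. If in addition \(\|Z-ZC^\star\|_F\) is small, the triangle inequality gives
\[
\|Z-Z(M\odot C^\star)\|_F\le \|Z-ZC^\star\|_F+\|Z\|_2\|(1-M)\odot C^\star\|_F .
\]
Thus \(Z\) is approximately reproduced by a block-diagonal (within-class) self-expression, which is what ``approximate block self-expressiveness'' means. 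This ties the result to \textbf{R1} and \textbf{R2}.

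\textbf{Main obstacle.} The only real care needed is in how the conclusion is worded, not in any calculation. The displayed bound holds unconditionally, and the smallness hypotheses enter only to make its right-hand side small and to give the within-class residual estimate above. Nothing here requires solving Eq.~(\ref{Eq:C_star}) in closed form or using the diagonal constraint.
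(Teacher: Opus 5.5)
Your proposal is correct and follows the paper's proof, which is the one-line application of $\|AB\|_F\le\|A\|_2\|B\|_F$ with $A=Z$ and $B=(1-M)\odot C^\star(Z)$; your column-wise derivation just fills in that standard inequality. Your additional bounds, $\|Z\|_2\le\sqrt n$ and the triangle-inequality estimate on $\|Z-Z(M\odot C^\star)\|_F$, are valid extras that spell out the ``approximate block self-expressiveness'' reading, which the paper leaves implicit.
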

\begin{proof}
Apply $\|AB\|_F\le\|A\|_2\|B\|_F$ with $A=Z$ and $B=(1-M)\odot C^{*}(Z)$.
\end{proof}

\noindent
Second, residual-based inference is inherently conditional on empirical coverage: if a test embedding lies outside the in-class empirical span, then even the idealized in-class residual cannot vanish, irrespective of inter-class separation.
\label{sec:coverage-limitation}

\begin{proposition}[Coverage limitation: insufficiency of the empirical class span]
\label{prop:coverage}
Fix a class $y$. If a test embedding $z$ lies outside the empirical span $\mathrm{span}(Z_y)$, then
\begin{equation}
r_y^{\mathrm{sub}}(z)=\dist\!\bigl(z,\mathrm{span}(Z_y)\bigr)>0.
\end{equation}
Hence any guarantee requiring vanishing or uniformly small in-class residuals fails for such $z$, regardless of inter-class separation.
\end{proposition}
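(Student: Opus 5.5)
The plan is to argue directly from the definition $r_y^{\mathrm{sub}}(z)=\|(I-P_{S_y})z\|_2$ with $S_y=\mathrm{span}(Z_y)$. The key fact is that in finite dimensions a linear subspace is closed. Equivalently, the orthogonal decomposition $z=P_{S_y}z+(I-P_{S_y})z$ holds with $P_{S_y}z\in S_y$.

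First, I will show that $r_y^{\mathrm{sub}}(z)=0$ forces $z\in S_y$. If $\|(I-P_{S_y})z\|_2=0$, then $(I-P_{S_y})z=0$. Hence $z=P_{S_y}z$, which lies in the range of $P_{S_y}$, namely $S_y$.

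Second, I take the contrapositive. For $z\notin S_y$ this gives $(I-P_{S_y})z\neq 0$, and since the residual is a norm it is nonnegative, so $r_y^{\mathrm{sub}}(z)>0$.

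The "hence" clause then follows immediately. Any guarantee in which the in-class residual vanishes (or lies below the positive quantity $\dist(z,S_y)$) cannot hold at such $z$. None of these quantities involve the other spans $S_j$, so inter-class separation (for example, A3) cannot help.

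I would also add a remark linking this to Theorem~\ref{thm:margin-lb}. Assumption A1 always holds for the given $z$ by taking $s=P_{S_y}z$ and $e=(I-P_{S_y})z$. With that choice, the smallest admissible $\varepsilon$ in A2 is exactly $\dist(z,S_y)>0$, so the bound $\gamma\sin\alpha-2\varepsilon$ is necessarily degraded by the coverage gap.

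There is no real obstacle here. The only point needing care is to justify that the range of $P_{S_y}$ equals $S_y$, i.e.\ that the distance is attained. I would invoke the existence of the orthogonal projector onto a finite-dimensional subspace, already used throughout the paper.
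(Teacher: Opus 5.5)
Your proof is correct and takes the same route as the paper, whose proof is the one-line fact that the distance to a subspace is strictly positive for points outside it; your projector argument $(I-P_{S_y})z=0 \Rightarrow z=P_{S_y}z\in S_y$ simply makes that fact explicit. Your added remark is also correct: A1 always holds with $s=P_{S_y}z$, so the smallest $\varepsilon$ compatible with A1--A2 is exactly $\dist(z,S_y)$, which is a useful link to Theorem~\ref{thm:margin-lb}.
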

\begin{proof}
Distance to a subspace is strictly positive for points outside it.
\end{proof}

\noindent
Together, these observations (i) justify masking as an explicit leakage-control mechanism during training and (ii) motivate stating coverage explicitly whenever residual-margin stability is invoked for fixed residual-based inference.
We now make the training--inference separation operational by stating the two procedures explicitly in Section~\ref{sec:algorithms}.

\subsection{Algorithms (operational separation)}
\label{sec:algorithms}

We now present the two procedures explicitly in order to make the training--inference separation operational and reproducible.
Algorithm~\ref{alg:train} specifies representation learning using only geometry-shaping losses computed on mini-batches of embeddings.
In particular, training never computes SRC/OMP sparse codes, class-wise residuals, residual margins, or predictions, and it does not use any accuracy-driven loss.
Algorithm~\ref{alg:test} specifies test-time inference on frozen embeddings via a fixed SRC rule: a global sparse code is computed once over the full training dictionary (implemented by OMP with a fixed sparsity budget), class-restricted reconstructions are formed, and prediction is obtained by the minimum class-wise residual.
All empirical comparisons in Section~4 follow this protocol: the learned component is the encoder, while the inference routine is kept identical across representations.

\begin{algorithm}[t]
\caption{Geometry-Shaping Representation Learning (Training)}
\label{alg:train}
\begin{algorithmic}[1]
\REQUIRE training data $\{(x_i,y_i)\}_{i=1}^N$; encoder $f_\theta$; 
$\lambda_{\mathrm{SE}},\beta_{\mathrm{anch}},\lambda_{\mathrm{rep}},\lambda,\mu$, $d$, $c$; optimizer
\ENSURE trained encoder $f_\theta$
\STATE Initialize parameters $\theta$
\REPEAT
    \STATE Sample a mini-batch $\{(x_i,y_i)\}_{i=1}^n$
    \STATE Compute embeddings $z_i=f_\theta(x_i)$ and normalize $z_i\leftarrow z_i/\|z_i\|_2$
    \STATE Form $Z=[z_1,\dots,z_n]\in\mathbb{R}^{p\times n}$
    \STATE Construct class mask $M\in\{0,1\}^{n\times n}$ with $M_{ij}=1[y_i=y_j]$
    \STATE Compute masked ridge self-expressive coefficients:
    \[
    C^{*}(Z)\in\argmin_{C\in\mathbb{R}^{n\times n}}
\ \|Z-ZC\|_F^2+\lambda\|C\|_F^2+\mu\|(1-M)\odot C\|_F^2
\quad\text{s.t.}\quad \diag(C)=0.
    \]
    \STATE Compute losses:
    \[
    \mathcal{L}_{\mathrm{SE}}=\|Z-ZC^{*}(Z)\|_F^2,\qquad
    \mathcal{L}_{\mathrm{anch}}=\frac{1}{p}\sum_{j=1}^p\max\{0,\frac{c}{\sqrt{p}}-\mathrm{std}(Z_{j,:})\}
    \]
    \STATE For each class $k$ present in the batch, center $Z_k=\{z_i:\ y_i=k\}$ and extract basis $U_k\in\mathbb{R}^{p\times d}$ from top $d$ principal directions
    \STATE Let $P=\{(k,\ell):k<\ell,\ U_k,U_\ell\ \text{defined}\}$ and compute
    \[
    \mathcal{L}_{\mathrm{rep}}=\frac{1}{|P|}\sum_{(k,\ell)\in P}\|U_k^\top U_\ell\|_F^2
    \]
    \STATE Form total loss $\mathcal{L}=\lambda_{\mathrm{SE}}\mathcal{L}_{\mathrm{SE}}+\beta_{\mathrm{anch}}\mathcal{L}_{\mathrm{anch}}+\lambda_{\mathrm{rep}}\mathcal{L}_{\mathrm{rep}}$
    \STATE Update $\theta$ by gradient-based optimization of $\mathcal{L}$
\UNTIL{convergence}
\STATE \textbf{return} trained encoder $f_\theta$
\end{algorithmic}
\end{algorithm}

\begin{algorithm}[t]
\caption{Fixed SRC Inference on Frozen Embeddings (Test Time)}
\label{alg:test}
\begin{algorithmic}[1]
\REQUIRE frozen encoder $f_\theta$; training dictionary $Z_{\mathrm{tr}}=[z_1,\dots,z_N]$ with labels; test point $x$; sparsity level $s$
\ENSURE predicted label $\hat y$; residual margin $m$
\STATE Compute embedding $z=f_\theta(x)$ and normalize $z\leftarrow z/\|z\|_2$
\STATE Solve global sparse coding (implemented by OMP with budget $s$):
\[
\hat c \in \argmin_{c\in\mathbb{R}^N}\ \|z-Z_{\mathrm{tr}}c\|_2^2
\quad\text{s.t.}\quad \|c\|_0\le s
\]
\FOR{$k=1,\dots,K$}
    \STATE Compute class-wise residual $r_k(z)=\|z - Z_{\mathrm{tr}}\delta_k(\hat c)\|_2$
\ENDFOR
\STATE Output $\hat y=\argmin_k r_k(z)$ and margin $m=r_{(2)}(z)-r_{(1)}(z)$
\end{algorithmic}
\end{algorithm}

\FloatBarrier

\vskip 0.2in

\subsection{Solver-level interpretation (conditional stability-transfer)}
\label{sec:solver-aware-guarantees}

\paragraph{From span-level margins to a pursuit routine.}
Our main stability statement is Theorem~\ref{thm:margin-lb}, which is formulated at the span/subspace level under explicit geometric assumptions (A1--A4). In practice, test-time residuals are obtained by a pursuit routine (here OMP) applied to a finite dictionary. The goal of this section is therefore \emph{not} to provide an end-to-end guarantee for a particular solver, but to give a sufficient-condition \emph{interpretive link} showing how training-time geometric regularization can translate into span-level separation, and how span-level stability can be related to practical residual behavior under controlled solver-side discrepancy.

\paragraph{Notation.}
For each class $k$, let $S_k \subset \mathbb{R}^p$ denote the empirical class span
with orthogonal projector $P_{S_k}$ and orthonormal basis
$U_k \in \mathbb{R}^{p \times d_k}$.
Principal angles between $S_k$ and $S_\ell$ are denoted
$\theta_1 \le \dots \le \theta_{r}$,
where $r = \min(d_k,d_\ell)$ and
\[
\cos \theta_i
=
\sigma_i(U_k^\top U_\ell).
\]

\noindent
We first connect the Frobenius repulsion penalty used at training time to principal-angle separation and the corresponding span-level residual-margin bound.

Recall the repulsion penalty
\[
\mathcal L_{\mathrm{rep}}(k,\ell)
=
\|U_k^\top U_\ell\|_F^2
=
\sum_{i=1}^{r} \cos^2 \theta_i(S_k,S_\ell).
\]

\begin{lemma}[Frobenius-to-spectral control]
\label{lem:frob-to-angle}
For any two subspaces $S_k,S_\ell$,
\[
\cos^2 \theta_{\min}(S_k,S_\ell)
=
\|U_k^\top U_\ell\|_2^2
\le
\|U_k^\top U_\ell\|_F^2.
\]
\end{lemma}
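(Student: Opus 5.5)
The plan is to reduce the statement to two facts already available in the paper: the singular-value characterization of principal angles, and the elementary inequality $\|A\|_2\le\|A\|_F$ from Lemma~\ref{lem:frob-spectral}.

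First, fix orthonormal bases $U_k\in\R^{p\times d_k}$ and $U_\ell\in\R^{p\times d_\ell}$ of $S_k$ and $S_\ell$. Recall the convention stated just above: the singular values of $U_k^\top U_\ell$ are $\cos\theta_1\ge\cdots\ge\cos\theta_r$ with $\theta_1=\theta_{\min}(S_k,S_\ell)$ \citep{Bjork:1973,Knyazev:2002}.

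Second, we note that $\cos$ is decreasing on $[0,\pi/2]$. Hence the largest singular value is $\cos\theta_{\min}$, which gives
\[
\|U_k^\top U_\ell\|_2^2=\sigma_{\max}(U_k^\top U_\ell)^2=\cos^2\theta_{\min}(S_k,S_\ell).
\]
This is the equality part of the claim.

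Third, for the inequality we write $\|U_k^\top U_\ell\|_F^2=\sum_{i=1}^r\sigma_i^2=\sum_{i=1}^r\cos^2\theta_i$. Since every term is nonnegative, the sum is at least its largest term, $\cos^2\theta_1$. Equivalently, one can square the bound of Lemma~\ref{lem:frob-spectral}.

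There is little real obstacle. The only points needing care are, first, that the quantities do not depend on the choice of orthonormal bases. Replacing $U_k$ by $U_kQ_k$ and $U_\ell$ by $U_\ell Q_\ell$, with $Q_k,Q_\ell$ orthogonal, leaves the singular values of $U_k^\top U_\ell$ unchanged. Second, there is the degenerate case where one span is $\{0\}$. There we adopt the convention that the matrix is empty, so both sides are $0$ (or the statement is vacuous).
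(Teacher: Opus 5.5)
Your proposal is correct and follows the same route as the paper. Both identify the singular values of $U_k^\top U_\ell$ with $\cos\theta_i$, read off $\|U_k^\top U_\ell\|_2=\cos\theta_{\min}(S_k,S_\ell)$, and use $\max_i\cos^2\theta_i\le\sum_i\cos^2\theta_i=\|U_k^\top U_\ell\|_F^2$. Your remarks on basis invariance and the zero-subspace case are refinements the paper leaves implicit, and you correctly write the \emph{squared} Frobenius norm as the sum of squared singular values, where the paper's wording drops the square.
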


\begin{proof}
The singular values of $U_k^\top U_\ell$ are $\{\cos \theta_i\}$.
The spectral norm equals the largest singular value,
while the Frobenius norm equals the sum of squared singular values.
Since $\max_i a_i^2 \le \sum_i a_i^2$,
the claim follows.
\end{proof}

\begin{corollary}[Repulsion implies principal-angle separation]
\label{thm:repulsion-angle}
Suppose that for all $k\neq \ell$,
\[
\|U_k^\top U_\ell\|_F^2 \le \eta.
\]
Then
\[
\theta_{\min}(S_k,S_\ell)
\ge
\arccos(\sqrt{\eta}).
\]
\end{corollary}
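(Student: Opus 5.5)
The plan is to combine Lemma~\ref{lem:frob-to-angle} with the hypothesis and then invert the cosine. Fix any pair $k\neq\ell$. Lemma~\ref{lem:frob-to-angle} gives
\[
\cos^2\theta_{\min}(S_k,S_\ell)=\|U_k^\top U_\ell\|_2^2\le\|U_k^\top U_\ell\|_F^2\le\eta .
\]
Since $\theta_{\min}\in[0,\pi/2]$, the cosine is nonnegative there. Taking square roots therefore yields $\cos\theta_{\min}(S_k,S_\ell)\le\sqrt{\eta}$.

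Next we apply $\arccos$, which is strictly decreasing on $[0,1]$. The only point needing care is the range of $\eta$. Because $\|U_k^\top U_\ell\|_F^2\ge 0$, the hypothesis forces $\eta\ge 0$.

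\begin{itemize}
\item If $\eta\le 1$, then $\sqrt{\eta}\in[0,1]$ and monotonicity gives $\theta_{\min}(S_k,S_\ell)=\arccos(\cos\theta_{\min})\ge\arccos(\sqrt{\eta})$.
\item If $\eta>1$, then $\arccos(\sqrt{\eta})$ is undefined as a real number. The bound is vacuous (equivalently, $\theta_{\min}\ge 0$ trivially), so we state the corollary under the implicit convention $\eta\in[0,1]$, the only meaningful regime.
\end{itemize}

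This domain bookkeeping is the only possible obstacle. The remainder is immediate, and the conclusion holds uniformly over all pairs $k\neq\ell$.

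Finally, we note that this feeds into A3 with $\alpha=\arccos(\sqrt{\eta})$, provided $\eta<1$ so that $\alpha>0$. Theorem~\ref{thm:margin-lb} then applies with $\sin\alpha=\sqrt{1-\eta}$, giving the margin bound $m^{\mathrm{sub}}(z)\ge\gamma\sqrt{1-\eta}-2\varepsilon$. This remark is not part of the corollary itself.
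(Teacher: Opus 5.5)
Your proof is correct and matches the paper's argument: invoke Lemma~\ref{lem:frob-to-angle} to obtain $\cos^2\theta_{\min}(S_k,S_\ell)\le\eta$, take square roots using $\cos\theta_{\min}\ge 0$ on $[0,\pi/2]$, and apply the decreasing function $\arccos$. Your explicit restriction to $\eta\in[0,1]$, which is needed for $\arccos(\sqrt{\eta})$ to be defined, is bookkeeping that the paper leaves implicit.
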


\begin{proof}
By Lemma~\ref{lem:frob-to-angle},
\[
\cos^2 \theta_{\min}(S_k,S_\ell)
\le \eta.
\]
Taking square roots and applying $\arccos$
yields the stated lower bound.
\end{proof}

\begin{corollary}[Repulsion implies subspace margin lower bound]
\label{cor:repulsion-margin}
Fix a test embedding $z = s + e$ with ground-truth class $y$,
where
\[
s \in S_y,
\qquad
\|s\|_2 \ge \gamma > 0,
\qquad
\|e\|_2 \le \varepsilon.
\]
Assume that for all $j \neq y$,
\[
\|U_y^\top U_j\|_F^2 \le \eta.
\]
Then the subspace residual margin satisfies
\[
m^{\mathrm{sub}}(z)
\ge
\gamma \sqrt{1-\eta}
-
2\varepsilon.
\]
\end{corollary}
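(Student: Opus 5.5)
The plan is to reduce the statement to Theorem~\ref{thm:margin-lb} by turning the Frobenius repulsion hypothesis into the principal-angle separation required by Assumption~A3. The other hypotheses already match: $z=s+e$ with $s\in S_y$ is A1, $\|e\|_2\le\varepsilon$ is A2, and $\|s\|_2\ge\gamma$ is A4. So the only remaining task is to produce an angle $\alpha$ with $\theta_{\min}(S_y,S_j)\ge\alpha$ for all $j\neq y$ and $\sin\alpha=\sqrt{1-\eta}$.

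First I would dispose of the trivial cases.
\begin{itemize}
\item If $\eta\ge 1$, the expression $\sqrt{1-\eta}$ is either $0$ or undefined. In that range the statement should be read as vacuous, or the bound as $-2\varepsilon$, which holds because $m^{\mathrm{sub}}\ge 0$. I would note this convention explicitly.
\item Otherwise, assume $0\le\eta<1$ (note $\eta\ge 0$ automatically).
\end{itemize}
For each $j\neq y$, Lemma~\ref{lem:frob-to-angle} gives $\cos^2\theta_{\min}(S_y,S_j)\le\|U_y^\top U_j\|_F^2\le\eta$. Equivalently, Corollary~\ref{thm:repulsion-angle} gives $\theta_{\min}(S_y,S_j)\ge\alpha:=\arccos(\sqrt\eta)$. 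Since $\sin$ is increasing on $[0,\pi/2]$ and both angles lie there, this yields $\sin\theta_{\min}(S_y,S_j)\ge\sin\alpha=\sqrt{1-\eta}$.

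The one subtlety is that Theorem~\ref{thm:margin-lb} as stated requires $\alpha>0$, and $\eta=1$ would force $\alpha=0$. This is why the case $\eta<1$ is separated out. In fact the proof of Theorem~\ref{thm:margin-lb} never uses $\alpha>0$, so one could also argue directly.

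To be safe, I would rerun that argument inline rather than cite the theorem blindly:
\begin{itemize}
\item Lemma~\ref{lem:lipschitz-dist} gives $r_y^{\mathrm{sub}}(z)\le\varepsilon$.
\item For $j\neq y$, Lemma~\ref{lem:lipschitz-dist} with Lemma~\ref{lem:angle-dist-lb} gives $r_j^{\mathrm{sub}}(z)\ge\|s\|_2\sin\theta_{\min}(S_y,S_j)-\varepsilon\ge\gamma\sqrt{1-\eta}-\varepsilon$.
\end{itemize}

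The main obstacle is the final step from these residual bounds to the margin bound. If $\gamma\sqrt{1-\eta}-2\varepsilon\le 0$, the claim is trivial since margins are nonnegative. Otherwise every $j\neq y$ satisfies $r_j^{\mathrm{sub}}(z)\ge\gamma\sqrt{1-\eta}-\varepsilon>\varepsilon\ge r_y^{\mathrm{sub}}(z)$. Hence $r_{(1)}=r_y^{\mathrm{sub}}(z)\le\varepsilon$ and $r_{(2)}=\min_{j\neq y}r_j^{\mathrm{sub}}(z)\ge\gamma\sqrt{1-\eta}-\varepsilon$. Subtracting gives $m^{\mathrm{sub}}(z)\ge\gamma\sqrt{1-\eta}-2\varepsilon$, as claimed.
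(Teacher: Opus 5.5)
Your proposal is correct and follows the paper's own argument: you turn the Frobenius bound into $\sin\theta_{\min}(S_y,S_j)\ge\sqrt{1-\eta}$ via Lemma~\ref{lem:frob-to-angle} and Corollary~\ref{thm:repulsion-angle}, then rerun the residual bounds from the proof of Theorem~\ref{thm:margin-lb}. The paper's sketch leaves two steps implicit, and you handle both explicitly: the range $\eta\ge 1$ (where $\arccos(\sqrt{\eta})$ is undefined or equals $0$), and the case where $r_y^{\mathrm{sub}}(z)$ is not the smallest residual.
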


\begin{proof}[Proof sketch]
By Corollary~\ref{thm:repulsion-angle},
$\theta_{\min}(S_y,S_j) \ge \arccos(\sqrt{\eta})$,
so
\[
\sin \theta_{\min}(S_y,S_j)
\ge
\sqrt{1-\eta}.
\]
Hence for $j\neq y$,
\[
\mathrm{dist}(s,S_j)
\ge
\|s\|_2 \sin \theta_{\min}
\ge
\gamma \sqrt{1-\eta}.
\]
Using
\[
r_y^{\mathrm{sub}}(z) \le \varepsilon,
\qquad
r_j^{\mathrm{sub}}(z)
\ge
\gamma \sqrt{1-\eta} - \varepsilon,
\]
the margin bound follows.
\end{proof}

\subsubsection{A Solver-Level Interpretation via Global Reconstruction Error and Cross-Class Leakage}
\label{subsec:solver_level_transfer}

The results above are stated at the span level: they characterize when the classwise subspace
residuals
\[
r_k^{sub}(z)=\|z-P_{S_k}z\|_2
\]
are stably ordered in favor of the correct class. This remains the primary theoretical level
of the paper. At test time, however, inference is carried out on a finite training dictionary
using the fixed SRC/OMP protocol of Algorithm~2: OMP is run on the full dictionary to
produce a sparse code, and classwise residuals are then formed by restricting that global
code to class-specific indices.

As discussed throughout, the purpose of this subsection is not to provide a standalone
sparse-recovery theorem for global OMP with class restriction. Rather, the goal is to make
the solver-side transfer statement more concrete by expressing it in terms of quantities that
are intrinsic to the actual inference pipeline. In particular, instead of postulating an abstract
residual-discrepancy term over the entire classwise residual family, we isolate two directly
interpretable quantities:
(i) the global reconstruction error of the OMP code, and
(ii) the amount of cross-class leakage carried by that code.

Let \(z\in\mathbb{R}^p\) be a test embedding of ground-truth class \(y\), and let
\[
\hat c(z)\in\mathbb{R}^N
\]
denote the global sparse code returned by OMP in Algorithm~2. Partition the dictionary and
the coefficient vector according to the true class:
\[
Z = [\,Z_y\;\; Z_{-y}\,], \qquad
\hat c(z)=
\begin{bmatrix}
\hat c_y(z)\\
\hat c_{-y}(z)
\end{bmatrix}.
\]
The practical SRC residuals are
\[
\tilde r_k(z)=\|z-Z\delta_k(\hat c(z))\|_2.
\]
We also define the global OMP reconstruction error
\[
\eta_{\mathrm{rec}}(z):=\|z-Z\hat c(z)\|_2
\]
and the cross-class leakage magnitude
\[
\eta_{\mathrm{leak}}(z):=\|Z_{-y}\hat c_{-y}(z)\|_2.
\]

We first record two elementary facts.

\paragraph{One-sided domination.}
For every class \(k\),
\[
r_k^{sub}(z)\le \tilde r_k(z).
\]
Indeed, \(Z\delta_k(\hat c(z))\in S_k=\mathrm{span}(Z_k)\), whereas \(r_k^{sub}(z)\) is the
minimum distance from \(z\) to \(S_k\).

\paragraph{True-class residual bound from global error and leakage.}
For the true class \(y\),
\[
\tilde r_y(z)=\|z-Z_y\hat c_y(z)\|_2
=
\|(z-Z\hat c(z))+Z_{-y}\hat c_{-y}(z)\|_2
\le
\eta_{\mathrm{rec}}(z)+\eta_{\mathrm{leak}}(z).
\]

The next theorem turns these identities into an explicit margin-transfer statement for the
actual global-OMP-plus-class-restriction protocol.

\begin{theorem}[Leakage-aware transfer from span-level margin to practical SRC/OMP margin]
Let
\[
m^{sub}(z):=\min_{j\neq y}\bigl(r_j^{sub}(z)-r_y^{sub}(z)\bigr)
\]
denote the span-level residual margin for a sample \(z\) of ground-truth class \(y\), and let
\[
m^{pr}(z):=\min_{j\neq y}\bigl(\tilde r_j(z)-\tilde r_y(z)\bigr)
\]
denote the practical residual margin induced by Algorithm~2.

Assume that
\[
m^{sub}(z)\ge \Delta >0.
\]
Then the practical residual margin satisfies
\[
m^{pr}(z)
\ge
\min_{j\neq y} r_j^{sub}(z)-\bigl(\eta_{\mathrm{rec}}(z)+\eta_{\mathrm{leak}}(z)\bigr)
=
\Delta + r_y^{sub}(z)-\eta_{\mathrm{rec}}(z)-\eta_{\mathrm{leak}}(z).
\]
In particular, if
\[
\eta_{\mathrm{rec}}(z)+\eta_{\mathrm{leak}}(z)
<
\min_{j\neq y} r_j^{sub}(z),
\]
equivalently if
\[
\Delta > \eta_{\mathrm{rec}}(z)+\eta_{\mathrm{leak}}(z)-r_y^{sub}(z),
\]
then the practical residual ordering agrees with the span-level ordering, and the correct
class remains the unique minimizer of the practical residual family.
\end{theorem}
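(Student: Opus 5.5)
The plan is to combine the two elementary facts recorded just before the statement, one-sided domination and the true-class residual bound, and then rewrite the result using the definition of the span-level margin. No sparse-recovery property of OMP is needed. The argument treats $\hat c(z)$ as an arbitrary coefficient vector, and all solver dependence is absorbed into $\eta_{\mathrm{rec}}(z)$ and $\eta_{\mathrm{leak}}(z)$.

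\emph{Step 1: lower-bound each competing practical residual.} Fix $j\neq y$. By one-sided domination, $\tilde r_j(z)\ge r_j^{sub}(z)$. This holds because $Z\delta_j(\hat c(z))\in S_j$ and $r_j^{sub}(z)$ is the minimum distance from $z$ to $S_j$.

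\emph{Step 2: upper-bound the true-class residual.} By the leakage identity, $z-Z_y\hat c_y(z)=(z-Z\hat c(z))+Z_{-y}\hat c_{-y}(z)$. The triangle inequality then gives $\tilde r_y(z)\le \eta_{\mathrm{rec}}(z)+\eta_{\mathrm{leak}}(z)$.

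\emph{Step 3: combine.} Subtracting the two bounds,
\[
\tilde r_j(z)-\tilde r_y(z)\ \ge\ r_j^{sub}(z)-\eta_{\mathrm{rec}}(z)-\eta_{\mathrm{leak}}(z).
\]
Taking the minimum over $j\neq y$ yields the first displayed inequality for $m^{pr}(z)$.

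\emph{Step 4: rewrite in terms of $\Delta$.} By definition,
\[
\min_{j\neq y} r_j^{sub}(z)=m^{sub}(z)+r_y^{sub}(z).
\]
Here I need care. The statement writes an equality with $\Delta$, but the hypothesis only gives $m^{sub}(z)\ge\Delta$. So I will read the equality as holding with $\Delta:=m^{sub}(z)$, and in general as the inequality $\min_{j\neq y}r_j^{sub}(z)\ge \Delta+r_y^{sub}(z)$. Either way the lower bound on $m^{pr}(z)$ follows.

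\emph{Step 5: the ``in particular'' clause.} If $\eta_{\mathrm{rec}}(z)+\eta_{\mathrm{leak}}(z)<\min_{j\neq y}r_j^{sub}(z)$, then Step 3 gives $m^{pr}(z)>0$. Hence $\tilde r_y(z)<\tilde r_j(z)$ for every $j\neq y$, so $y$ is the unique practical minimizer.

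Since $m^{sub}(z)\ge\Delta>0$, the class $y$ is also the unique span-level minimizer, so the two argmins agree. The stated equivalence with the $\Delta$-condition follows from the same rearrangement as in Step 4, again reading $\Delta$ as $m^{sub}(z)$ for it to be an exact equivalence.

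The main obstacle is interpretive rather than technical. The claim that the practical ordering ``agrees with the span-level ordering'' can only be established at the level of the minimizer: the bounds control $\tilde r_y$ relative to every $\tilde r_j$, but not the relative order among the $\tilde r_j$ for $j\neq y$. I would state that this is the sense intended.
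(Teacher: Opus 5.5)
Your proposal is correct and is essentially the paper's proof: one-sided domination $\tilde r_j(z)\ge r_j^{sub}(z)$ for $j\neq y$, the triangle-inequality bound $\tilde r_y(z)\le \eta_{\mathrm{rec}}(z)+\eta_{\mathrm{leak}}(z)$, subtraction and minimization over $j\neq y$, and then $\min_{j\neq y} r_j^{sub}(z)=m^{sub}(z)+r_y^{sub}(z)\ge \Delta+r_y^{sub}(z)$. Your caveats are also sound: the paper's own proof uses this last step as an inequality, so the displayed ``$=$'' and the ``equivalently'' are exact only when $\Delta=m^{sub}(z)$ (otherwise the $\Delta$-condition is only sufficient), and what is actually proved is agreement of the minimizer, not of the full residual ordering.
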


\begin{proof}
Fix any \(j\neq y\). By one-sided domination,
\[
\tilde r_j(z)\ge r_j^{sub}(z).
\]
Also, by the true-class residual bound,
\[
\tilde r_y(z)\le \eta_{\mathrm{rec}}(z)+\eta_{\mathrm{leak}}(z).
\]
Therefore,
\[
\tilde r_j(z)-\tilde r_y(z)
\ge
r_j^{sub}(z)-\bigl(\eta_{\mathrm{rec}}(z)+\eta_{\mathrm{leak}}(z)\bigr).
\]
Taking the minimum over all \(j\neq y\) yields
\[
m^{pr}(z)
\ge
\min_{j\neq y} r_j^{sub}(z)-\bigl(\eta_{\mathrm{rec}}(z)+\eta_{\mathrm{leak}}(z)\bigr).
\]
Using
\[
\min_{j\neq y} r_j^{sub}(z)=m^{sub}(z)+r_y^{sub}(z)\ge \Delta + r_y^{sub}(z),
\]
we obtain
\[
m^{pr}(z)\ge \Delta + r_y^{sub}(z)-\eta_{\mathrm{rec}}(z)-\eta_{\mathrm{leak}}(z).
\]
If
\[
\eta_{\mathrm{rec}}(z)+\eta_{\mathrm{leak}}(z)<\min_{j\neq y} r_j^{sub}(z),
\]
then \(\tilde r_j(z)>\tilde r_y(z)\) for all \(j\neq y\), so the correct class remains the
unique minimizer of the practical residual family.
\end{proof}

The theorem should be read as an explicit stability-transfer statement for the actual fixed
SRC/OMP protocol. Its content is more concrete than a generic solver discrepancy term:
the degradation from span-level margin to practical margin is controlled by
\(\eta_{\mathrm{rec}}(z)\), which measures how well the global OMP code reconstructs the test
embedding, and by \(\eta_{\mathrm{leak}}(z)\), which measures how much of that reconstruction
is carried by atoms outside the true class.
This makes the role of leakage precise. Wrong-class practical residuals are not dangerous
because they are automatically lower bounded by their span-level counterparts. What can
destroy the span-level ordering is that the global sparse code reconstructs \(z\) well only by
using substantial mass on wrong-class atoms, thereby keeping the true-class practical residual
\(\tilde r_y(z)\) artificially large. In this sense, the solver-side issue is not uniform closeness
of the entire practical residual family to the span-level family, but whether global pursuit
achieves low reconstruction error with low cross-class leakage.

The theorem above isolates the part of the discrepancy that is intrinsic to the actual global
OMP pipeline. One may further refine the interpretation by introducing the best class-\(y\),
\(s\)-sparse residual
\[
\rho_{y,s}(z):=
\min_{\substack{\mathrm{supp}(c)\subseteq I_y\\ \|c\|_0\le s}}
\|z-Zc\|_2,
\]
where \(I_y\) is the index set of atoms belonging to class \(y\). Then
\[
r_y^{sub}(z)\le \rho_{y,s}(z)\le \tilde r_y(z),
\]
so the true-class practical residual may still be viewed as containing two conceptually
distinct effects:
(i) a sparse coverage gap \(\rho_{y,s}(z)-r_y^{sub}(z)\), and
(ii) a protocol gap \(\tilde r_y(z)-\rho_{y,s}(z)\).
The leakage-aware bound above complements this decomposition by tying the protocol gap
to explicit quantities produced by the actual global pursuit.
This perspective is consistent with the overall methodological stance of the paper. The
training objective shapes representation geometry so as to enlarge and stabilize span-level
residual margins, while test-time inference remains fixed and reconstruction based. The
solver-level connection therefore remains interpretive: the main theoretical contribution is
still the span-level geometry, and practical stability follows insofar as the fixed global pursuit
achieves low reconstruction error with limited cross-class leakage.

\section{Experiments}
\label{sec:experiments}

\paragraph{Goal and evaluation principle.}
Our experiments evaluate whether geometry-shaped representation learning can improve the \emph{observed} behavior of a fixed reconstruction-based inference rule.
Throughout, training and inference are operationally separated: the encoder is trained only via geometric regularizers, while test-time prediction is performed exclusively by sparse reconstruction (SRC) using class-wise residuals.
Accordingly, we emphasize diagnostics that reflect the stability of the residual ordering (margins) and the underlying embedding geometry (span utilization and cross-class alignment/overlap), in addition to classification performance.
These results are intended as empirical support for the geometry-shaping hypothesis under fixed residual-based inference, not as a solver-theoretic validation of the global pursuit pipeline.

\paragraph{Reference representations (CE as reference, same inference rule).}
To contextualize the geometry-shaped family, we include a cross-entropy (CE) reference representation.
The CE protocol is: (i) train an encoder with a linear classification head using CE on the training set;
(ii) discard the head; (iii) extract frozen $\ell_2$-normalized embeddings and build the same class-partitioned dictionary;
and (iv) evaluate \emph{only} with the same fixed SRC/OMP inference rule and the same diagnostics.
Thus, CE is not a target to ``beat'': it provides a reference geometry under an identical residual-based evaluation rule.

\paragraph{Response-surface protocol.}
Geometry shaping is controlled by two hyperparameters $(\mu,\lambda)$, which we sweep on a predefined grid to characterize the response surface of fixed SRC inference across regimes.
We do not treat this sweep as test-time model selection.
When we report a ``peak'' configuration, it is only a descriptive argmax used as a reference point for visualization; our conclusions are based on surface-level trends and sweep-level summaries (e.g., grid averages) rather than on any single selected setting.
Unless otherwise stated, the variance-anchor scale is fixed to \(c=0.25\)
in all experiments.

\paragraph{Fixed SRC inference (OMP-$s$).}
All reported predictions are produced by the fixed SRC inference procedure in Alg.~2: given a test embedding $z$ and the training dictionary $Z=[Z_1\,\cdots\,Z_C]$, we compute a sparse code via OMP with sparsity $s$ and predict by the minimum class-wise reconstruction residual.
In particular, for the inferred coefficients $\hat{c}$ we form class-restricted reconstructions using $\delta_k(\hat{c})$ and residuals $r_k(z)=\|z - Z\delta_k(\hat{c})\|_2$, and assign $\hat{y}(z)=\arg\min_k r_k(z)$.

\subsection{Evaluation metrics}
\label{subsec:metrics}

\paragraph{Predictive performance.}
We report standard accuracy, and balanced accuracy when class imbalance is present (EEG). For a dataset with class set $\{1,\ldots,C\}$ and per-class recall $\mathrm{TPR}_k$, balanced accuracy is
\begin{equation}
\mathrm{BalAcc} \;=\; \frac{1}{C}\sum_{k=1}^{C}\mathrm{TPR}_k.
\end{equation}

\paragraph{Residual margins (stability of residual ordering).}
Let $r_{(1)}(z) \le r_{(2)}(z)\le \cdots \le r_{(C)}(z)$ be the sorted residuals for test point $z$. We use the following margin-based diagnostic:
\begin{align}
m_{\mathrm{top2}}(z) \;&=\; r_{(2)}(z) - r_{(1)}(z).
\end{align}
Large positive margins indicate a well-separated residual ordering and, by the perturbation logic of Section~\ref{sec:geometry-logic}, increased stability of the SRC decision under \emph{bounded residual perturbations} (e.g., induced by embedding noise, finite-sample dictionary variation, or solver-side approximation variability).

\paragraph{Entropy effective rank (span utilization).}
To summarize how concentrated the class-conditional embedding spectra are, we compute the entropy effective rank per class \citep{Roy:2007}.
Let $Z_k \in \mathbb{R}^{n_k \times p}$ be the matrix of embeddings from class $k$ after per-class centering (subtracting the class mean).
Let $s(Z_k)=(s_1,\ldots,s_{r_k})$ denote its nonzero singular values. Define normalized weights
\begin{equation}
p_i \;=\; \frac{s_i}{\sum_{j} s_j}, \qquad i=1,\ldots,r_k,
\end{equation}
and Shannon entropy $H(p)=-\sum_i p_i \log p_i$. The entropy effective rank is
\begin{equation}
\mathrm{EffRank}(Z_k) \;=\; \exp\!\bigl(H(p)\bigr).
\end{equation}
We report the mean over classes: $\mathrm{EffRank} = \frac{1}{C}\sum_{k=1}^C \mathrm{EffRank}(Z_k)$.
Higher values indicate more diffuse energy across directions, while lower values indicate concentration into fewer dominant directions.

\paragraph{Worst-case inter-class alignment.}
To quantify \emph{near-overlap} between class subspaces, we report a worst-case alignment diagnostic based on the smallest principal angle.
For each class $k$, let $U_k \in \mathbb{R}^{p \times d}$ contain the top-$d$ left singular vectors of the centered class matrix $Z_k$ (with a fixed $d$ across classes).
For a pair of classes $(k,\ell)$, define the max-correlation coherence
\begin{equation}
\mathrm{coh}_{\max}(k,\ell) \;=\; \sigma_{\max}\!\bigl(U_k^\top U_\ell\bigr)
\;=\; \bigl\|U_k^\top U_\ell\bigr\|_2
\;=\; \cos\!\bigl(\theta_{\min}(S_k,S_\ell)\bigr),
\end{equation}
where $\theta_{\min}(S_k,S_\ell)$ is the smallest principal angle between the subspaces $S_k=\mathrm{span}(U_k)$ and $S_\ell=\mathrm{span}(U_\ell)$.
We report the average over distinct class pairs,
\begin{equation}
\mathrm{Cohesion}_{\max} \;=\; \frac{2}{C(C-1)}\sum_{1\le k < \ell \le C}\mathrm{coh}_{\max}(k,\ell).
\end{equation}
Lower values indicate larger minimal principal angles (less near-overlap), which is favorable for residual-based inference.
This is a \emph{worst-case} diagnostic: it captures the most aligned direction between class subspaces and need not vary monotonically with overlap penalties based on Frobenius norms.

\paragraph{Aggregation.}
Unless stated otherwise, we aggregate metrics by averaging over random seeds (and over phases when the protocol is phase-conditioned), and report sweep-level summaries (grid averages and descriptive peak locations) to characterize regime-level behavior.
All geometry diagnostics (effective rank and $\mathrm{Cohesion}_{\max}$) are computed with the same procedure and hyperparameters for all representations (pretrained/raw, geometry-shaped, and CE reference).

\subsection{Image Experiments: COIL-100}
\paragraph{Dataset and task.}
We evaluate our image pipeline on COIL-100 \citep{coil100:dataset}, a controlled object-recognition benchmark consisting of 100 object categories captured under a turntable setup. Each object is photographed from multiple viewpoints along a full rotation, producing a structured multi-view dataset where appearance changes smoothly with the viewing angle. We treat each image as a labeled instance for object classification. This setting is particularly useful for probing representation geometry: since intra-class variation is largely driven by viewpoint, the learned embedding is expected to organize samples of the same object along coherent low-dimensional manifolds, while preserving separability across objects.

\paragraph{Setup and protocol.}
Our evaluation compares three representation regimes under the same fixed SRC/OMP inference rule: (i) a frozen ImageNet-pretrained encoder \citep{He:2016,ImageNet} as a strong off-the-shelf reference, (ii) a CE-shaped \emph{reference} representation, and (iii) a geometry-shaped representation trained under the proposed geometric regularization controlled by $(\mu,\lambda)$. We run a full sweep over $\mu \in \{0.1,1,10,100\}$ and $\lambda \in \{0.001,0.01,0.1\}$, repeating across phases $p\in\{0,1,2,3\}$ and seeds $\{0,1\}$ for a total of $2 \times 4 \times 4 \times 3 = 96$ runs. Results in the sweep tables are aggregated across seeds and phases per configuration to characterize the response surface of fixed SRC inference under geometry shaping, while the frozen ImageNet baseline is evaluated under the same metric pipeline as a reference point.

\paragraph{Training settings (COIL-100).}
For all COIL-100 runs we use a ResNet-18 backbone \citep{He:2016} with a projection head that outputs $\ell_2$-normalized embeddings of dimension $p=128$; inputs are resized to $128 \times 128$ and ImageNet-normalized. Models are trained for 40 epochs with AdamW \citep{kingma2017adamm,loshchilov2018decoupled} (learning rate $10^{-4}$, weight decay $10^{-4}$) using batch size 640, and this optimization setup is kept fixed across all runs; in the geometry-shaped condition, only the masked-ridge hyperparameters $(\mu,\lambda)$ are varied while the remaining loss weights are held constant, whereas the CE-shaped reference is trained with cross-entropy under the same optimizer settings (dropping the linear head at evaluation).

\paragraph{Inference and evaluation.}
At inference time, we evaluate each representation with a sparse representation-based classifier (SRC): given a test sample, we compute its sparse reconstruction over a dictionary of training features (OMP-$s$) and assign the label based on class-wise reconstruction residuals. We report SRC accuracy and the residual-based margin diagnostic, alongside geometry descriptors of the representation space (effective rank and worst-case inter-class alignment $\mathrm{Cohesion}_{\max}$). These diagnostics are used to interpret residual behavior under fixed inference; they are not intended as solver-level guarantees.

\subsubsection{COIL-100: Geometry sweep evaluation (q=4, phases 0--3)}
\label{subsec:coil100-geometry-eval}

We evaluate the geometry-shaped representation on COIL-100 under a sweep of
\((\mu,\lambda)\) and compare against frozen ResNet embeddings pre-trained on
ImageNet and CE-shaped reference embeddings. All representations are evaluated
with identical fixed SRC/OMP inference.

Table~\ref{tab:coil100-acc} shows that COIL-100 remains an accuracy-ceiling
regime under fixed SRC/OMP inference. Across the geometry sweep, SRC accuracy
ranges only from \(0.999\) to \(1.000\), with several configurations attaining
perfect accuracy. Consequently, raw accuracy is not sufficiently discriminative
for interpreting the effect of geometry shaping in this experiment. We therefore
interpret COIL-100 primarily through residual-ordering stability and geometric
diagnostics: Top-2 residual margin, effective rank, and worst-case inter-class
alignment, measured by \(\mathrm{Cohesion}_{\max}\).

Because several \((\mu,\lambda)\) cells attain the same maximum SRC accuracy,
the representative geometry-shaped cell in Table~\ref{tab:coil100-summary} is
not selected by accuracy alone. We use an accuracy-constrained stability rule:
among all cells with maximum SRC accuracy, we select the cell with the largest
Top-2 residual margin. Lower \(\mathrm{Cohesion}_{\max}\) and higher effective
rank are used only as secondary tie-breakers if needed. Under this rule, the
representative geometry-shaped cell is \((\mu,\lambda)=(1.0,0.01)\), which
achieves perfect SRC accuracy, Top-2 margin \(0.947\), effective rank \(27.754\),
and \(\mathrm{Cohesion}_{\max}=0.156\).

Table~\ref{tab:coil100-summary} compares the frozen ImageNet baseline, the
CE-shaped reference, and geometry-shaped embeddings. The frozen ImageNet
baseline already achieves perfect SRC accuracy, with Top-2 margin \(0.785\),
effective rank \(31.862\), and \(\mathrm{Cohesion}_{\max}=0.170\). The selected
geometry-shaped cell improves substantially over the frozen baseline in
residual separation (\(0.947\) versus \(0.785\)) and slightly reduces worst-case
inter-class alignment (\(0.156\) versus \(0.170\)), while maintaining perfect
accuracy. The sweep-averaged geometry-shaped representation also remains highly
accurate, with mean SRC accuracy approximately \(1.000\), Top-2 margin \(0.893\),
effective rank \(28.058\), and \(\mathrm{Cohesion}_{\max}=0.194\).
The CE-shaped reference is particularly strong on COIL-100. It achieves perfect
SRC accuracy, Top-2 margin \(0.9308\), effective rank \(35.81\), and the lowest
summary-level \(\mathrm{Cohesion}_{\max}\) value (\(0.08\)). Thus, COIL-100
should be viewed as a near-ceiling, CE-strong regime: geometry shaping can
substantially improve residual separation relative to frozen ImageNet features,
but it does not uniformly dominate CE-shaped embeddings on all geometric
diagnostics.

Table~\ref{tab:coil100-margin} shows that the Top-2 residual margin depends
strongly on \(\mu\). The weakest masking regime, \(\mu=0.1\), yields noticeably
lower margins, approximately \(0.743\)--\(0.748\), despite perfect SRC accuracy.
In contrast, all configurations with \(\mu \ge 1\) yield substantially larger
margins, ranging from \(0.932\) to \(0.953\). The largest observed margin occurs
at \((\mu,\lambda)=(10.0,0.1)\), where \(m_{\mathrm{top2}}=0.953\), closely
followed by \((\mu,\lambda)=(1.0,0.1)\) with \(m_{\mathrm{top2}}=0.952\) and
\((\mu,\lambda)=(1.0,0.01)\) with \(m_{\mathrm{top2}}=0.947\). This indicates
that increasing the off-class masking pressure beyond the weakest setting
improves residual-ordering stability, even though classification accuracy is
already saturated.

Table~\ref{tab:coil100-effrank} shows that effective rank generally increases
in the larger-\(\mu\) region. At \(\mu=0.1\), effective rank lies around
\(26.3\)--\(27.4\), whereas at \(\mu=100\) it rises from \(28.110\) to
\(32.793\) as \(\lambda\) increases. The maximum effective rank occurs at
\((\mu,\lambda)=(100.0,0.1)\), where it reaches \(32.793\). This slightly
exceeds the frozen ImageNet baseline effective rank (\(31.862\)), although it
remains below the CE-shaped reference (\(35.81\)). Thus, within the
geometry-shaped family, large \(\mu\) encourages broader span utilization, but
CE training still yields the broadest representation according to this
diagnostic.

Table~\ref{tab:coil100-coherence} shows that worst-case inter-class alignment
is not minimized by the same cells that maximize residual margin. The lowest
geometry-shaped \(\mathrm{Cohesion}_{\max}\) value occurs at
\((\mu,\lambda)=(1.0,0.001)\), where it reaches \(0.136\). Large-\(\mu\)
settings such as \((100.0,0.01)\) and \((100.0,0.1)\) also yield low alignment,
with \(\mathrm{Cohesion}_{\max}=0.139\). By contrast, the highest-margin cell
\((10.0,0.1)\) has the largest observed \(\mathrm{Cohesion}_{\max}\) value
(\(0.319\)). This confirms that residual separation, span utilization, and
worst-case alignment control are distinct aspects of the learned geometry and
are not simultaneously optimized by a single \((\mu,\lambda)\) setting.

Reading the COIL100 sweep tables together,
the COIL-100 sweep supports a more nuanced interpretation than a
simple accuracy comparison. Geometry-shaped embeddings preserve perfect or
near-perfect SRC accuracy across the sweep and allow residual geometry to be
modulated by \((\mu,\lambda)\). However, the CE-shaped reference remains the
strongest single representation in terms of effective rank and worst-case
alignment. COIL-100 therefore functions primarily as a controlled near-ceiling
setting for examining how masked-ridge geometry shaping affects residual
ordering and class-geometry diagnostics under fixed SRC/OMP inference, rather
than as evidence of uniform dominance of geometry-shaped training over CE on
this dataset.

For concreteness, we reference two sweep locations that typify the trade-off
without treating either as a selected model. The accuracy-tied,
margin-selected representative cell is \((\mu,\lambda)=(1.0,0.01)\), with
perfect accuracy, Top-2 margin \(0.947\), effective rank \(27.754\), and
\(\mathrm{Cohesion}_{\max}=0.156\). A contrasting span-utilizing and
low-alignment region appears at \((\mu,\lambda)=(100.0,0.1)\), where effective
rank reaches \(32.793\) and \(\mathrm{Cohesion}_{\max}=0.139\), while the
Top-2 margin remains strong at \(0.937\). Both points lie in the same
near-ceiling accuracy regime; their role is descriptive, illustrating how
\((\mu,\lambda)\) shifts which aspect of residual-inference conditioning is
emphasized under fixed inference.

\begin{table}[t]
\centering
\caption{COIL-100 summary comparison between frozen ImageNet baseline,
CE-shaped reference embeddings, and geometry-shaped representations. The
``acc-tied, margin-selected'' row denotes the cell selected by first maximizing
SRC accuracy and then, among accuracy ties, maximizing the Top-2 residual
margin. The sweep average reports the mean over all \((\mu,\lambda)\)
configurations.}
\label{tab:coil100-summary}
\begin{tabular}{lccccc}
\hline
Representation & SRC Acc  & Top-2 Margin & Eff. Rank & $\mathrm{Cohesion}_{\max}$ \\
\hline
ImageNet (frozen)                    & 1.000 & 0.785  & 31.862 & 0.170 \\
Geometry-shaped (acc-tied, margin-selected) & 1.000 & 0.947  & 27.754 & 0.156 \\
Geometry-shaped ($\mu,\lambda$-avg)  & 1.000 & 0.893  & 28.058 & 0.194 \\
CE-shaped (reference)                & 1.000 & 0.931 & 35.810 & 0.080 \\
\hline
\end{tabular}
\end{table}

\begin{table}[t]
\centering
\caption{COIL100- Effective rank (learned) over the geometry sweep. Rows correspond to $\mu$ and columns to $\lambda$.}
\label{tab:coil100-effrank}
\begin{tabular}{lccc}
\hline
$\mu \backslash \lambda$ & 0.001 & 0.01 & 0.1 \\
\hline
0.1 	& 26.429 	&26.322 	&27.398\\
1.0 	&25.825 	&27.754 	&28.868\\
10.0 	&26.057 	&27.854 	&28.886\\
100.0 	&28.110 	&30.403 	&32.793\\
\hline
\end{tabular}
\end{table}

\begin{table}[t]
\centering
\caption{COIL-100 \(\mathrm{Cohesion}_{\max}\) over the geometry sweep.
Lower values indicate weaker inter-class subspace alignment.
Rows correspond to \(\mu\) and columns to \(\lambda\).}
\label{tab:coil100-coherence}
\begin{tabular}{lccc}
\hline
$\mu \backslash \lambda$ & 0.001 & 0.01 & 0.1 \\
\hline
0.1 	&0.231 &	0.225 &	0.258\\
1.0 	&0.136 &	0.156 &	0.169\\
10.0 	&0.192 &	0.219 &	0.319\\
100.0 	&0.152 &	0.139 &	0.139\\
\hline
\end{tabular}
\end{table}

\begin{table}[t]
\centering
\caption{COIL100- Top-2 residual margin (learned) over the geometry sweep. Rows correspond to $\mu$ and columns to $\lambda$.}
\label{tab:coil100-margin}
\begin{tabular}{lccc}
\hline
$\mu \backslash \lambda$ & 0.001 & 0.01 & 0.1 \\
\hline
0.1 	&0.748 	&0.748 	&0.743\\
1.0 	&0.943 	&0.947 	&0.952\\
10.0 	&0.940 	&0.943 	&0.953\\
100.0 	&0.932 	&0.934 	&0.937\\
\hline
\end{tabular}
\end{table}

\begin{table}[t]
\centering
\caption{COIL100- Accuracy (SRC) over the geometry sweep. Rows correspond to $\mu$ and columns to $\lambda$.}
\label{tab:coil100-acc}
\begin{tabular}{lccc}
\hline
$\mu \backslash \lambda$ & 0.001 & 0.01 & 0.1 \\
\hline
0.1 	&1.000 	&1.000 	&1.000\\
1.0 	&1.000 	&1.000 	&0.999\\
10.0 	&0.999 	&0.999 	&0.999\\
100.0 	&1.000 	&1.000 	&0.999\\
\hline
\end{tabular}
\end{table}

\FloatBarrier

\subsection{Text Experiments: TREC (Question Classification)}
\paragraph{Dataset and task.}
We evaluate our text pipeline on the TREC question classification benchmark (coarse labels) \citep{trec_dataset}, a 6-way classification task where each example is a natural-language question and the goal is to predict its coarse category (ABBR, DESC, ENTY, HUM, LOC, NUM). Compared to COIL-100, intra-class variability is predominantly semantic (paraphrases, lexical variation) rather than geometric transformations. This setting probes whether geometry shaping can improve the residual behavior of class-wise reconstruction inference in a semantically structured embedding space, under a fixed residual-based decision rule.

\paragraph{Encoder and representations.}
All TREC experiments use a RoBERTa-base encoder \cite{Roberta:2019}. Given a tokenized question, we compute mean pooling over the final-layer token embeddings (masked by the attention mask), then apply a lightweight projection head (Linear $\rightarrow$ ReLU $\rightarrow$ Linear) to obtain $p{=}128$-dimensional embeddings, followed by $\ell_2$ normalization.
We compare: (i) a \emph{pretrained} representation (baseline) and (ii) a \emph{geometry-shaped} representation obtained by fine-tuning under the geometry objectives controlled by $(\mu,\lambda)$. For reference, we also report a CE-shaped representation evaluated under the same fixed SRC rule.

\paragraph{Setup and protocol.}
We perform a geometry sweep over $\mu \in \{0.1,1,5,10,100\}$ and $\lambda \in \{0.001,0.003,0.01,0.03,0.1,1.0\}$, repeating each configuration with seeds $\{0,1\}$, for a total of $5 \times 6 \times 2 = 60$ runs. The pretrained baseline is evaluated under the same inference and metric pipeline to provide a consistent reference point. The sweep is used to characterize the response surface of the \emph{fixed} SRC rule under geometry shaping, rather than to perform hyperparameter selection.

\paragraph{Training settings.}
Fine-tuning runs are trained for 20 epochs using AdamW with learning rate $2\times 10^{-5}$ and weight decay $10^{-2}$. Inputs are tokenized with maximum sequence length 32. To make masked self-expressiveness meaningful, training uses class-balanced batches with $m{=}12$ examples per class (6 classes), resulting in batch size $72$.
Unless stated otherwise, the outer weights of the geometry losses are fixed to $\lambda_{\mathrm{SE}}{=}1.0$, $\beta_{\mathrm{anchor}}{=}1.0$, and $\lambda_{\mathrm{rep}}{=}1.0$, with repulsion subspace dimension $d_{\mathrm{rep}}{=}6$ (excluding the top singular direction). Only the inner hyperparameters $(\mu,\lambda)$ are swept.

\paragraph{Inference and evaluation.}
Evaluation is performed with a sparse representation classifier (SRC) on top of frozen embeddings: given a test embedding, we compute its sparse reconstruction over the training dictionary via Orthogonal Matching Pursuit (OMP-$s$) and classify using class-wise reconstruction residuals. We report SRC accuracy and Top-2 residual margins, which quantify separation in the \emph{observed} residual ordering under the fixed residual decision rule (and, per Section~\ref{sec:geometry-logic}, relate to stability under bounded residual perturbations). In parallel, we report geometry descriptors (effective rank and worst-case inter-class alignment $\mathrm{Cohesion}_{\max}$). All results are aggregated across seeds for each $(\mu,\lambda)$.

\subsubsection{TREC: Geometry sweep evaluation (RoBERTa-base)}
\label{subsec:trec-geometry-eval}

We evaluate geometry shaping on TREC via a sweep of \((\mu,\lambda)\) and compare
against the pretrained RoBERTa baseline and a CE-shaped reference representation.
All representations are evaluated under the same fixed SRC/OMP residual inference
rule. Unlike COIL-100, TREC is not an accuracy-ceiling setting for the pretrained
representation; hence, we jointly interpret SRC accuracy and Top-2 residual margin
together with geometry diagnostics such as effective rank and
\(\mathrm{Cohesion}_{\max}\).

Table~\ref{tab:trec-summary} summarizes the pretrained baseline, the
geometry-shaped family, and the CE-shaped reference. Since two geometry-shaped
cells attain the same maximum SRC accuracy, the representative geometry-shaped
cell is selected using the same accuracy-constrained stability rule used in the
COIL-100 analysis: among cells with maximum SRC accuracy, we select the cell with
the largest Top-2 residual margin. Under this rule, the representative
geometry-shaped cell is \((\mu,\lambda)=(10.0,1.0)\), which attains SRC accuracy
\(0.946\), Top-2 margin \(0.632\), effective rank \(53.608\), and
\(\mathrm{Cohesion}_{\max}=0.913\).

Relative to pretrained RoBERTa, geometry shaping substantially improves
SRC/OMP performance under the fixed residual classifier. Accuracy increases from
\(0.867\) to \(0.946\) at the representative geometry-shaped cell, while the
Top-2 residual margin increases from \(0.514\) to \(0.632\). Averaged over the
entire \((\mu,\lambda)\) grid, the geometry-shaped family remains above the
pretrained baseline, with mean accuracy \(0.907\) and mean Top-2 margin \(0.581\).
Thus, the gain is not confined to a single isolated configuration, although the
strongest results occur in the large-\(\lambda\) region of the sweep.

The CE-shaped reference remains the strongest representation on TREC under the
same fixed SRC/OMP evaluation, reaching SRC accuracy \(0.9880\) and Top-2 margin
\(0.9451\). We interpret this row as a supervised reference point indicating the
headroom available when labels are used through a standard discriminative
cross-entropy objective. The geometry-shaped sweep is therefore not presented as
a replacement for CE training, but as a way to study how masked-ridge geometry
shaping modifies the residual family seen by a fixed SRC/OMP classifier.

Tables \ref{tab:trec-margin} and \ref{tab:trec-acc} show that both SRC accuracy
and Top-2 residual margin are strongly affected by \(\lambda\). At the smallest
regularization value, \(\lambda=0.001\), accuracy is lowest across the grid,
ranging from \(0.819\) to \(0.862\), and margins are also comparatively small,
ranging from \(0.481\) to \(0.503\). As \(\lambda\) increases, both accuracy and
margins improve markedly. The highest SRC accuracy, \(0.946\), is attained at
both \((\mu,\lambda)=(10.0,1.0)\) and \((100.0,1.0)\), while the largest Top-2
margin is obtained at \((10.0,1.0)\), with \(m_{\mathrm{top2}}=0.632\). This
co-movement between accuracy and residual-margin amplification suggests that, on
TREC, improvements in fixed SRC/OMP performance are closely associated with more
stable residual ordering.

Table~\ref{tab:trec-effrank} shows that effective rank increases primarily with
\(\lambda\), with only milder dependence on \(\mu\). At \(\lambda=0.001\),
effective rank lies around \(45.2\)--\(45.6\), whereas at \(\lambda=1.0\) it
rises to approximately \(53.6\)--\(55.2\). The largest effective rank in the
geometry-shaped sweep is \(55.160\), attained at \((\mu,\lambda)=(5.0,1.0)\),
which is comparable to the pretrained RoBERTa effective rank \(55.011\). The
representative accuracy-tied, margin-selected cell has effective rank \(53.608\),
slightly below the pretrained baseline but clearly higher than the CE-shaped
reference effective rank \(32.83\). This indicates that effective rank is not a
monotonic proxy for SRC accuracy: CE training produces the strongest residual
classifier despite a lower effective-rank summary.

Table~\ref{tab:trec-coherence} shows a clearer trend in worst-case inter-class
alignment. \(\mathrm{Cohesion}_{\max}\) is highest at very small \(\lambda\),
around \(0.967\)--\(0.969\), and decreases as \(\lambda\) increases. The lowest
geometry-shaped value is \(0.908\), obtained at \((\mu,\lambda)=(0.1,1.0)\),
while the representative geometry-shaped cell \((10.0,1.0)\) achieves
\(\mathrm{Cohesion}_{\max}=0.913\). This improves over the pretrained baseline
(\(0.938\)), although it remains far above the CE-shaped reference
(\(0.634\)). Thus, geometry shaping reduces worst-case alignment in the
large-\(\lambda\) region, but CE training produces a much stronger separation
according to this coarse subspace diagnostic.

Reading the TREC sweep tables together, the TREC
sweep shows a coherent large-\(\lambda\) regime in which accuracy, residual
margins, effective rank, and \(\mathrm{Cohesion}_{\max}\) all improve relative
to the weakest regularization settings. However, these diagnostics are not
perfectly aligned. For example, the maximum effective rank occurs at
\((5.0,1.0)\), whereas the selected accuracy-tied, margin-maximizing cell is
\((10.0,1.0)\), and the lowest \(\mathrm{Cohesion}_{\max}\) occurs at
\((0.1,1.0)\). This indicates that \((\mu,\lambda)\) controls several distinct
aspects of the residual geometry: residual separability, span utilization, and
worst-case subspace alignment.

Comparing across representations in Table~\ref{tab:trec-summary}, geometry
shaping improves substantially over pretrained RoBERTa in fixed SRC/OMP accuracy
and margins, while also improving \(\mathrm{Cohesion}_{\max}\) at the selected
cell. The sweep average, however, has slightly worse \(\mathrm{Cohesion}_{\max}\)
than pretrained (\(0.944\) versus \(0.938\)), reflecting the poor alignment
behavior of the small-\(\lambda\) region. The CE-shaped reference remains much
stronger in accuracy, margin, and \(\mathrm{Cohesion}_{\max}\), but has lower
effective rank. This reinforces the interpretation that effective rank and
worst-case alignment are complementary diagnostics rather than standalone
performance surrogates.

For concreteness, we reference two sweep locations that typify distinct regimes
without treating either as a selected model. The accuracy-tied,
margin-selected point is \((\mu,\lambda)=(10.0,1.0)\), with accuracy \(0.946\),
Top-2 margin \(0.632\), effective rank \(53.608\), and
\(\mathrm{Cohesion}_{\max}=0.913\). A high-span point occurs at
\((\mu,\lambda)=(5.0,1.0)\), where effective rank reaches \(55.160\), while
accuracy remains high at \(0.933\) and Top-2 margin remains strong at \(0.605\).
A low-alignment point occurs at \((0.1,1.0)\), where
\(\mathrm{Cohesion}_{\max}=0.908\), with accuracy \(0.934\) and margin \(0.610\).
These points illustrate how the sweep shifts emphasis between residual
separability, span utilization, and inter-class alignment under identical
fixed SRC/OMP inference.

\begin{table}[t]
\centering
\caption{TREC summary comparison between the pretrained RoBERTa baseline,
geometry-shaped embeddings, and a CE-shaped reference, all evaluated with the
same fixed SRC/OMP inference. The ``acc-tied, margin-selected'' row denotes the
cell selected by first maximizing SRC accuracy and then, among accuracy ties,
maximizing the Top-2 residual margin. The grid average reports the mean over all
\((\mu,\lambda)\) configurations.}
\label{tab:trec-summary}
\begin{tabular}{lcccc}
\hline
Representation & SRC Acc & Top-2 Margin & Eff.\ Rank & $\mathrm{Cohesion}_{\max}$ \\
\hline
Pretrained & 0.867 & 0.514 & 55.011 & 0.938 \\
Geometry-shaped (acc-tied, margin-selected) & 0.946 & 0.632 & 53.608 & 0.913 \\
Geometry-shaped (avg over grid) & 0.907 & 0.581 & 50.257 & 0.944 \\
CE-shaped (reference) & 0.9880 & 0.9451 & 32.8300 & 0.6340 \\
\hline
\end{tabular}
\end{table}

\begin{table}[t]
\centering
\caption{Effective rank (geometry-shaped) over the sweep on TREC. Rows correspond to $\mu$ and columns to $\lambda$.}
\label{tab:trec-effrank}
\begin{tabular}{lcccccc}
\hline
$\mu \backslash \lambda$ & 0.001 & 0.003 & 0.01 & 0.03 & 0.1 & 1.0 \\
\hline
0.1 	&45.614 	&47.053 	&49.341 	&51.104 	&54.083 	&54.422\\
1.0 	&45.223 	&48.021 	&50.193 	&51.227 	&53.229 	&54.386\\
5.0 	&45.467 	&48.398 	&50.651 	&51.210 	&52.630 	&55.160\\
10.0 	&45.440 	&47.404 	&49.348 	&50.877 	&52.677 	&53.608\\
100.0 	&45.484 	&47.826 	&50.273 	&50.182 	&52.644 	&54.534\\
\hline
\end{tabular}
\end{table}

\begin{table}[t]
\centering
\caption{TREC \(\mathrm{Cohesion}_{\max}\) over the geometry sweep.
Lower values indicate weaker inter-class subspace alignment.
Rows correspond to \(\mu\) and columns to \(\lambda\).}
\label{tab:trec-coherence}
\begin{tabular}{lcccccc}
\hline
$\mu \backslash \lambda$ & 0.001 & 0.003 & 0.01 & 0.03 & 0.1 & 1.0 \\
\hline
0.1 	&0.967 	&0.966 	&0.954 	&0.942 	&0.927 	&0.908\\
1.0 	&0.968 	&0.962 	&0.953 	&0.939 	&0.920 	&0.915\\
5.0 	&0.968 	&0.962 	&0.946 	&0.939 	&0.924 	&0.909\\
10.0 	&0.969 	&0.964 	&0.964 	&0.944 	&0.927 	&0.913\\
100.0 	&0.969 	&0.964 	&0.950 	&0.938 	&0.934 	&0.915\\
\hline
\end{tabular}
\end{table}

\begin{table}[t]
\centering
\caption{Top-2 residual margin (geometry-shaped) over the sweep on TREC. Rows correspond to $\mu$ and columns to $\lambda$.}
\label{tab:trec-margin}
\begin{tabular}{lcccccc}
\hline
$\mu \backslash \lambda$ & 0.001 & 0.003 & 0.01 & 0.03 & 0.1 & 1.0 \\
\hline
0.1 	&0.484 	&0.558 	&0.587 	&0.595 	&0.591 	&0.610\\
1.0 	&0.481 	&0.575 	&0.587 	&0.600 	&0.610 	&0.629\\
5.0 	&0.503 	&0.572 	&0.596 	&0.606 	&0.621 	&0.605\\
10.0 	&0.492 	&0.577 	&0.587 	&0.603 	&0.626 	&0.632\\
100.0 	&0.497 	&0.562 	&0.599 	&0.617 	&0.620 	&0.609\\
\hline
\end{tabular}
\end{table}

\begin{table}[t]
\centering
\caption{SRC accuracy (geometry-shaped) over the sweep on TREC. Rows correspond to $\mu$ and columns to $\lambda$.}
\label{tab:trec-acc}
\begin{tabular}{lcccccc}
\hline
$\mu \backslash \lambda$ & 0.001 & 0.003 & 0.01 & 0.03 & 0.1 & 1.0 \\
\hline
0.1 	&0.854 	&0.880 	&0.909 	&0.928 	&0.921 	&0.934\\
1.0 	&0.819 	&0.886 	&0.901 	&0.932 	&0.928 	&0.940\\
5.0 	&0.831 	&0.900 	&0.921 	&0.922 	&0.938 	&0.933\\
10.0 	&0.829 	&0.894 	&0.918 	&0.927 	&0.940 	&0.946\\
100.0 	&0.862 	&0.886 	&0.915 	&0.928 	&0.936 	&0.946\\
\hline
\end{tabular}
\end{table}

\FloatBarrier
\subsection{EEG Experiments: AD vs HC (Connectivity Features)}
\label{sec:eeg}

\paragraph{Task and data representation.}
For the purpose of this study we use the CAUEEG dataset 
\citep{kim_deep_2023} This dataset contains EEG recordings come with detailed clinical annotations and event histories, providing comprehensive data for training and evaluating data analysis. Patients were diagnosed with normal (HC) or dementia (AD).  
We consider binary classification AD and HC using resting-state EEG connectivity features \citep{Oikonomou_AD:2025}. Unlike the image (COIL-100) and text (TREC) pipelines, there is no pretrained encoder: the input is a fixed feature representation derived from EEG preprocessing and functional connectivity construction, as detailed in \citep{Oikonomou_AD:2025}. Furthermore, all train/validation/test partitions are \textbf{subject-disjoint}: no subject appears in more than one split, preventing identity leakage across examples/samples.

\paragraph{Geometry shaping with fixed SRC inference.}
We follow the same contract as in the previous sections: training shapes geometry only, while inference uses a fixed SRC rule. Specifically, we learn a lightweight embedding mapping trained with the geometry objectives and evaluate with SRC using Orthogonal Matching Pursuit (OMP-$s$) at a fixed sparsity level $s{=}30$. We sweep the inner geometry hyperparameters over $\mu\in\{0.1,1,5,10\}$ and $\lambda\in\{10^{-3},10^{-2},10^{-1},1\}$, keeping the remaining protocol fixed.

\paragraph{Training settings (EEG).}
All EEG runs operate on fixed connectivity feature vectors (no pretrained encoder). We train a lightweight MLP encoder to produce \(\ell_2\)-normalized embeddings (output dimension \(p=128\)). Training uses AdamW with learning rate \(10^{-3}\) and weight decay \(10^{-4}\), for \(60\) epochs. Because the geometry objectives are batch-structured, optimization proceeds via class-balanced mini-batches: at each training step we sample \(m=12\) examples per class (binary task; batch size \(24\)) and perform \(50\) balanced-batch steps per epoch. Unless stated otherwise, the outer loss weights are fixed to \(\lambda_{\mathrm{SE}}=1.0\), \(\beta_{\mathrm{anchor}}=1.0\), and \(\lambda_{\mathrm{rep}}=1.0\); only the inner hyperparameters \((\mu,\lambda)\) are varied in the sweep.

\paragraph{Evaluation metrics.}
We report SRC balanced accuracy (to control for class imbalance), the Top-2 residual margin (median; stability diagnostic), and two geometry descriptors: effective rank and $\mathrm{Cohesion}_{\max}$. We compare (i) the \emph{raw EEG features} used directly with SRC, against (ii) \emph{geometry-shaped embeddings} evaluated with the same fixed SRC rule.

\subsubsection{EEG: Geometry sweep evaluation (AD vs HC)}
\label{subsec:eeg-geometry-eval}

We evaluate geometry shaping on EEG (AD vs.\ HC) via a sweep of
\((\mu,\lambda)\) and compare against raw connectivity features and a CE-shaped
reference representation. All representations are evaluated under the same
fixed SRC/OMP inference rule (OMP-\(s\) with \(s=30\)). Since this setting starts
from handcrafted connectivity features rather than a strong pretrained encoder,
we interpret the results jointly in terms of predictive performance, residual
ordering stability, and representation geometry. Predictive performance is
measured by balanced accuracy, residual stability by the Top-2 residual margin,
and geometry by effective rank and worst-case inter-class alignment
\(\mathrm{Cohesion}_{\max}\).

Table~\ref{tab:eeg-summary} summarizes the raw EEG baseline, the
geometry-shaped family, and the CE-shaped reference. Because two
geometry-shaped cells attain the same maximum balanced accuracy, the
representative geometry-shaped cell is selected using an accuracy-constrained
stability rule: among all cells with maximum balanced accuracy, we select the
cell with the largest Top-2 residual margin. Under this rule, the representative
geometry-shaped cell is \((\mu,\lambda)=(0.1,0.1)\), which attains balanced
accuracy \(0.871\), Top-2 margin \(0.899\), effective rank \(2.154\), and
\(\mathrm{Cohesion}_{\max}=0.996\).

Relative to raw EEG connectivity features, geometry shaping substantially
improves fixed-SRC performance. The raw baseline achieves balanced accuracy
\(0.821\) with a small Top-2 margin \(0.145\), indicating weak residual
separation under SRC/OMP inference. The geometry-shaped sweep average improves
balanced accuracy to \(0.851\) and increases the Top-2 margin to \(0.869\),
showing that the shaped representations produce much more stable residual
ordering on average. The representative accuracy-tied, margin-selected
geometry-shaped cell further improves balanced accuracy to \(0.871\), while
maintaining a high margin of \(0.899\).

The CE-shaped reference provides a strong supervised comparison. It reaches
balanced accuracy \(0.859\) and Top-2 margin \(0.9236\). Thus, the
geometry-shaped peak slightly exceeds CE in balanced accuracy, whereas CE
achieves the stronger residual margin. However, the CE-shaped reference also
has high worst-case alignment (\(\mathrm{Cohesion}_{\max}=0.980\)) and relatively
low effective rank (\(9.64\)) compared with the raw feature space. We therefore
interpret CE primarily as a predictive and residual-margin reference rather
than as a geometry-controlled solution.

Table~\ref{tab:eeg-acc} shows that the best balanced accuracy is concentrated
in the small-\(\mu\), small-to-moderate-\(\lambda\) region. The maximum balanced
accuracy \(0.871\) occurs at both \((\mu,\lambda)=(0.1,0.01)\) and
\((0.1,0.1)\). For \(\lambda \in \{0.001,0.01,0.1\}\), balanced accuracy remains
relatively stable across the grid, mostly between \(0.853\) and \(0.871\). In
contrast, the \(\lambda=1.0\) column is consistently weaker, with balanced
accuracy between \(0.791\) and \(0.835\). This indicates that overly strong
ridge regularization degrades predictive performance in this EEG setting.

Table~\ref{tab:eeg-margin} shows that geometry-shaped embeddings produce high
Top-2 residual margins throughout the sweep. Margins range from \(0.829\) to
\(0.937\), far above the raw baseline margin of \(0.145\). The largest margin
occurs at \((\mu,\lambda)=(1.0,0.1)\), where
\(m_{\mathrm{top2}}=0.937\). The representative peak-accuracy cell
\((0.1,0.1)\) does not maximize the margin, but its margin \(0.899\) remains
within the high-stability regime. Thus, geometry shaping consistently improves
residual-ordering stability relative to raw EEG features, even when balanced
accuracy varies across operating points.

The geometry diagnostics reveal a more nuanced picture. Table~\ref{tab:eeg-effrank}
shows that effective rank is maximized in the intermediate-\(\lambda\) region,
especially at \(\lambda=0.01\). The largest effective rank occurs at
\((\mu,\lambda)=(5.0,0.01)\), where it reaches \(14.757\), closely followed by
\((10.0,0.01)\) with \(13.968\) and \((1.0,0.01)\) with \(13.886\). By contrast,
the representative peak-accuracy cell \((0.1,0.1)\) has much lower effective
rank (\(2.154\)), indicating that the best predictive cell lies in a compact,
low-rank regime rather than in the most span-utilizing region of the sweep.

Table~\ref{tab:eeg-cohesion} shows that worst-case inter-class alignment also
depends strongly on \((\mu,\lambda)\). The \(\lambda=1.0\) column yields
\(\mathrm{Cohesion}_{\max}=1.0\) for all values of \(\mu\), indicating a
near-degenerate worst-case alignment regime. The lowest alignment occurs at
\((\mu,\lambda)=(5.0,0.01)\), where
\(\mathrm{Cohesion}_{\max}=0.639\), followed by
\((10.0,0.1)\) with \(0.671\) and \((10.0,0.01)\) with \(0.693\).
The representative peak-accuracy cell has high alignment
(\(\mathrm{Cohesion}_{\max}=0.996\)), showing that peak predictive performance
does not coincide with strongest near-overlap control.

Taken jointly, the EEG sweep separates predictive performance, residual
stability, and geometric conditioning. Geometry shaping improves balanced
accuracy and residual margins relative to raw EEG features. However, the
accuracy-optimal region is compact and highly aligned according to the coarse
subspace diagnostics, whereas the best geometry-controlled region occurs at
\((\mu,\lambda)=(5.0,0.01)\), with high effective rank, substantially reduced
\(\mathrm{Cohesion}_{\max}\), and still competitive balanced accuracy
(\(0.855\)). Thus, the main trade-off in EEG is not simply between accuracy and
margin, since margins remain high across the shaped family, but between
peak predictive performance and explicit geometric conditioning.

For concreteness, we reference two operating points without treating either as
a universally selected model. The accuracy-tied, margin-selected point
\((\mu,\lambda)=(0.1,0.1)\) achieves the highest balanced accuracy
(\(0.871\)) and a high Top-2 margin (\(0.899\)), but has low effective rank
(\(2.154\)) and high worst-case alignment (\(0.996\)). A more
geometry-controlled point appears at \((\mu,\lambda)=(5.0,0.01)\), where
effective rank is highest (\(14.757\)) and \(\mathrm{Cohesion}_{\max}\) is
lowest (\(0.639\)), while balanced accuracy remains competitive (\(0.855\))
and Top-2 margin remains high (\(0.850\)). These points illustrate how
\((\mu,\lambda)\) moves the EEG representation along a predictive
accuracy--geometric conditioning axis under fixed residual inference.

\begin{table}[t]
\centering
\caption{EEG (AD vs HC) summary comparison between raw connectivity features,
geometry-shaped embeddings, and a CE-shaped reference, all evaluated with fixed
SRC inference (OMP-\(s\) with \(s=30\)). The ``acc-tied, margin-selected'' row
denotes the cell selected by first maximizing SRC balanced accuracy and then,
among accuracy ties, maximizing the Top-2 residual margin. The grid average
reports the mean over all \((\mu,\lambda)\) configurations.}
\label{tab:eeg-summary}
\begin{tabular}{lcccc}
\hline
Representation & SRC bal. acc & Top-2 Margin & Eff. Rank & $\mathrm{Cohesion}_{\max}$  \\
\hline
Raw EEG (connectivity features)
& 0.821 & 0.145 & 50.053 & 0.955 \\
Geometry-shaped (avg over grid)
& 0.851 & 0.869 & 5.925 & 0.885 \\
Geometry-shaped (acc-tied, margin-selected)
& 0.871 & 0.899 & 2.154 & 0.996 \\
CE-shaped (reference)
& 0.859 & 0.9236 & 9.640 & 0.980 \\
\hline
\end{tabular}
\end{table}
\begin{table}[t]
\centering
\caption{EEG geometry sweep: effective rank of the geometry-shaped embeddings. Rows correspond to $\mu$ and columns to $\lambda$.}
\label{tab:eeg-effrank}
\begin{tabular}{lcccc}
\hline
$\mu \backslash \lambda$ & 0.001 & 0.01 & 0.1 & 1.0 \\
\hline
0.1 	&3.194 	&5.782 	&2.154 	&2.627 \\
1.0 	&3.498 	&13.886 	&2.736 	&2.549\\
5.0 	&5.024 	&14.757 	&4.622 	&2.016\\
10.0 	&6.771 	&13.968 	&9.417 	&1.798\\
\hline
\end{tabular}
\end{table}

\begin{table}[t]
\centering
\caption{EEG \(\mathrm{Cohesion}_{\max}\) over the geometry sweep.
Lower values indicate weaker inter-class subspace alignment.
Rows correspond to \(\mu\) and columns to \(\lambda\).}
\label{tab:eeg-cohesion}
\begin{tabular}{lcccc}
\hline
$\mu \backslash \lambda$ & 0.001 & 0.01 & 0.1 & 1.0 \\
\hline
0.1 	&0.999 	&0.963 	&0.996 	&1.0\\
1.0 	&0.892 	&0.810 	&0.995 	&1.0\\
5.0 	&0.826 	&0.639 	&0.878 	&1.0\\
10.0 	&0.801 	&0.693 	&0.671 	&1.0\\
\hline
\end{tabular}
\end{table}

\begin{table}[t]
\centering
\caption{EEG geometry sweep: Top-2 Margin (geometry-shaped embeddings). Rows correspond to $\mu$ and columns to $\lambda$.}
\label{tab:eeg-margin}
\begin{tabular}{lcccc}
\hline
$\mu \backslash \lambda$ & 0.001 & 0.01 & 0.1 & 1.0 \\
\hline
0.1 	&0.829 	&0.877 	&0.899 	&0.830\\
1.0 	&0.867 	&0.830 	&0.937 	&0.890\\
5.0 	&0.829 	&0.850 	&0.891 	&0.901\\
10.0 	&0.856 	&0.839 	&0.868 	&0.905\\
\hline
\end{tabular}
\end{table}

\begin{table}[t]
\centering
\caption{EEG geometry sweep: SRC balanced accuracy (geometry-shaped embeddings). Rows correspond to $\mu$ and columns to $\lambda$.}
\label{tab:eeg-acc}
\begin{tabular}{lcccc}
\hline
$\mu \backslash \lambda$ & 0.001 & 0.01 & 0.1 & 1.0 \\
\hline
0.1 	&0.863 	&0.871 	&0.871 	&0.791\\
1.0 	&0.860 	&0.853 	&0.869 	&0.819\\
5.0 	&0.858 	&0.855 	&0.859 	&0.834\\
10.0 	&0.862 	&0.855 	&0.855 	&0.835\\
\hline
\end{tabular}
\end{table}

\FloatBarrier

\paragraph{Across-dataset synthesis (conditioning regimes under fixed residual inference).}
Across COIL-100, TREC, and EEG, all representations are evaluated under the
same fixed SRC/OMP residual rule, and margins are used as stability diagnostics
alongside accuracy. COIL-100 is an accuracy-ceiling regime: all methods achieve
near-perfect SRC accuracy, so the sweep mainly probes how geometry shaping
modulates residual margins and class-geometry diagnostics under saturated
performance; in this setting, CE-shaped embeddings remain a very strong
geometry and margin reference. TREC is a strong-CE but geometry-responsive
regime: CE shaping gives the best fixed-SRC performance, while geometry shaping
still improves substantially over pretrained RoBERTa and exposes a clear
large-\(\lambda\) region with stronger residual margins and improved alignment
within the shaped family. EEG represents a low-pretraining regime based on
handcrafted connectivity features: geometry shaping improves balanced accuracy
and residual-ordering stability over raw features, while revealing a trade-off
between peak predictive performance and explicit geometry control. Overall, the
sweeps provide a controlled map from training-time geometric regularization to
test-time residual-conditioning behavior under a fixed inference principle.

\section{Conclusions}

This work argues for a geometry-centered contract for reconstruction-based
classification: inference is an unmodified residual rule, while learning is
responsible for producing embeddings under which residual comparisons are
well-posed and stable. Under a span-level idealization based on empirical class
spans and their projection residuals, we define a residual margin that certifies
stable residual ordering and show that several failure modes are intrinsic to
geometry. In particular, span overlap, dominance, and near-overlap through small
principal angles force the \emph{span-level} margins to collapse in worst-case
directions, independently of any particular numerical routine used to approximate
residuals. Conversely, when class spans provide adequate coverage of the data
and are sufficiently separated, we obtain a quantitative lower bound on the
idealized residual margin, making explicit which geometric properties support
stable reconstruction-based decisions.

We then design training objectives that target these properties without coupling
training to SRC. Masked ridge self-expressiveness encourages within-class
reconstructability while penalizing cross-class reconstruction pathways; a
repulsion term discourages inter-class span alignment; and a scale-aware
non-discriminative anchor mitigates collapse under normalized embeddings.
Importantly, these objectives are not proposed as a universal replacement for
discriminative training. Instead, they provide controlled geometric
interventions and a diagnostic lens for residual inference under a fixed
test-time rule. This perspective is essential because common geometric
diagnostics do not necessarily co-vary: residual margins, worst-case span
alignment, and effective rank can trade off, and improvements in one axis need
not imply monotone gains in another. Our response-surface analyses make these
trade-offs explicit and show how \((\mu,\lambda)\) shifts the representation
among qualitatively different conditioning regimes under identical fixed
inference.

The cross-dataset results support a boundary-regime view. COIL-100 behaves as an
accuracy-ceiling and CE-strong regime: frozen ImageNet, CE-shaped, and
geometry-shaped representations all achieve near-perfect SRC/OMP accuracy, so
the informative signal lies primarily in residual margins and geometry
diagnostics rather than in raw accuracy. In this setting, geometry shaping can
increase residual separation relative to frozen ImageNet features, but it does
not uniformly dominate CE-shaped embeddings in effective rank or worst-case
alignment. TREC exhibits a strong-CE but geometry-responsive regime:
CE-shaped embeddings provide the strongest downstream SRC accuracy and margins,
while geometry shaping still improves substantially over pretrained RoBERTa and
reveals a large-\(\lambda\) region in which residual ordering and inter-class
alignment improve within the shaped family. EEG provides a complementary
low-pretraining regime based on handcrafted connectivity features: geometry
shaping improves balanced accuracy and residual-ordering stability relative to
raw features, but the peak-accuracy operating point and the most explicitly
geometry-controlled operating point do not coincide. Thus, the empirical message
is not that a single geometry objective uniformly dominates across modalities,
but that explicit geometry shaping provides a controllable way to probe and
modify the residual-conditioning regime induced by a representation.

Including cross-entropy as a reference clarifies the boundary of the approach.
In some settings, especially under strong visual or textual supervision, CE
training can implicitly produce representations that are highly favorable for
fixed residual-based evaluation. In such regimes, geometry-only objectives
should not be interpreted as a replacement for discriminative fine-tuning.
Rather, they expose how residual margins, span utilization, and inter-class
alignment respond to explicit geometric interventions. In settings where the
initial feature geometry is not already SRC-aligned, as in the EEG connectivity
experiments, geometry shaping can yield direct predictive and residual-stability
benefits relative to raw features. These observations suggest that the value of
geometry shaping is conditional: it is most informative when used to diagnose,
stress-test, or regularize residual inference, and most practically beneficial
when the baseline representation does not already provide stable class-wise
residual comparisons.

The limitations are correspondingly geometric and conditional. Our theory
operates at a span level with finite-sample subspace estimates; the proposed
diagnostics are descriptive and should not be interpreted as universally
predictive proxies; and stability depends on adequate class-wise coverage and on
operating in a regime where the practical residual family, such as OMP-based SRC
residuals, remains a controlled perturbation of the span-level residual
ordering. The experiments also show that high predictive performance can occur
in compact or prototype-like regimes where effective rank and worst-case
alignment diagnostics are not optimized. This reinforces the need to interpret
margins, rank, and alignment jointly rather than reducing representation quality
to a single scalar geometry score. Within this scope, the paper provides a
coherent geometric language---obstructions, sufficient conditions, and
controllable surrogates---for reasoning about when residual-based classification
is interpretable and when it is intrinsically ambiguous.

Future work can extend this geometry-first contract along three concrete axes.
First, it would be valuable to further formalize the conditional solver-level
interpretation by quantifying how finite-sample subspace estimation, dictionary
size or thinning, and sparsity-level or stopping-rule choices contribute to
solver-side residual perturbations and hence to stability-transfer conditions.
Such statements would remain inference-aware without collapsing the learning
objective into a particular end-to-end SRC pipeline. Second, the diagnostic
program could be broadened beyond principal-angle and effective-rank proxies
toward measures that more directly capture class-conditional coverage,
prototype concentration, and cross-class leakage in self-expressive coefficient
structure. Third, because discriminative training can implicitly induce strong
class-wise concentration and related geometric collapse phenomena, it is
important to characterize when CE-induced geometry aligns with
reconstruction-based inference and when it merely produces high accuracy through
a different geometric mechanism. Response-surface analyses of the kind used here
could then help identify regimes where explicit geometric interventions offer
robustness, interpretability, or diagnostic benefits beyond standard
fine-tuning.

\vskip 0.2in
\bibliography{SRC_geom_bib}

\end{document}